\DeclareMathOperator*{\argmin}{arg\,min}
\DeclareMathOperator*{\argmax}{arg\,max}
\DeclareMathOperator*{\kl}{KL}
\def\bbE{\mathbb{E}}
\def\bbR{\mathbb{R}}
\def\calF{\mathcal{F}}
\def\calR{\mathcal{R}}
\def\calP{\mathcal{P}}
\def\calQ{\mathcal{Q}}
\def\calH{\lambda}
\def\pdata{P_{\scaleto{\textnormal{data}}{4pt}}}
\def\hatpdata{\widehat{P}_{\scaleto{\textnormal{data}}{4pt}}}
\newcommand{\bydef}{\stackrel{\bigtriangleup}{=}}
\newcommand*\diff{\mathop{}\!d}
\DeclarePairedDelimiter\abs{\lvert}{\rvert}
\DeclarePairedDelimiter\norm{\lVert}{\rVert}
\newtheorem{theorem}{Theorem}
\newtheorem{definition}[theorem]{Definition}
\newtheorem{fact}[theorem]{Fact}
\newtheorem{lemma}[theorem]{Lemma}
\newtheorem{corollary}[theorem]{Corollary}
\newtheorem{remark}[theorem]{Remark}
\title{The Inductive Bias of Restricted $f$-GANs}
\author[1]{Shuang Liu\thanks{shuangliu@ucsd.edu}}
\author[2]{Kamalika Chaudhuri\thanks{kamalika@cs.ucsd.edu}}
\affil[1, 2]{University of California, San Diego}
\begin{document}
\maketitle

\begin{abstract}
Generative adversarial networks are a novel method for statistical inference that have achieved much empirical success; however, the factors contributing to this success remain ill-understood. In this work, we attempt to analyze generative adversarial learning -- that is, statistical inference as the result of a game between a generator and a discriminator -- with the view of understanding how it differs from classical statistical inference solutions such as maximum likelihood inference and the method of moments. 

Specifically, we provide a theoretical characterization of the distribution inferred by a simple form of generative adversarial learning called restricted $f$-GANs -- where the discriminator is a function in a given function class, the distribution induced by the generator is restricted to lie in a pre-specified distribution class and the objective is similar to a variational form of the $f$-divergence. A consequence of our result is that for linear KL-GANs -- that is, when the discriminator is a linear function over some feature space and $f$ corresponds to the KL-divergence -- the distribution induced by the optimal generator is neither the maximum likelihood nor the method of moments solution, but an interesting combination of both. 
\end{abstract}

\section{Introduction}


Generative adversarial networks (GANs) ~\cite{goodfellow2014generative} are a novel method for statistical inference that have received a great deal of recent attention. Given input samples from a data distribution, inference is carried out in the form of a two-player game between a generator and a discriminator, which are usually neural networks with pre-specified architectures. The generator attempts to generate samples that progressively mimic the input data; the discriminator attempts to accurately discriminate between the input and samples produced by the generator. The game continues until the discriminator fails to detect if an instance comes from the input or is produced by the generator, at which point the generator is said to have learned the data distribution. 
 
While generative adversarial networks have achieved much empirical success, the factors contributing to their success remain a mystery. For example, even if we ignore finite sample and optimization issues, it is still unknown what the GAN solution looks like, and what its relationship is to classical statistical solutions such as maximum likelihood and method of moments. Properties of the solution are partially understood when the generator is unrestricted~\cite{Nowozin2016f, goodfellow2014generative, liu2017approximation} and can produce samples from any distribution. In practice, we always have {\em{model mismatch}} -- the class of distributions that the generator produce samples from is restricted, and the input data distribution usually does not lie in this class. In this case, the relationship between the generator class, the discriminator class and the output distribution remains ill-understood. 

In this paper, we consider this problem in the context of restricted $f$-GANs -- which are $f$-GANs~\cite{Nowozin2016f} where the discriminator belongs to a class of functions $H$. We provide a theoretical characterization of the solutions provided in these cases under model mismatch. Our analysis relies on the Fenchel-Moreau theorem and Ky Fan's minimax theorem, with subroutines heavily inspired by \cite{rockafellar1968integrals, rockafellar2015measures,  rockafellar2018risk}.

An important consequence of our result can be seen when we specialize it to linear KL-GANs -- $f$-GANs whose objective function correspond to the variational form of the KL-divergence, and whose discriminator class is the set of all functions linear over a pre-specified feature set. In this case, we show that the distribution induced by the optimal generator is neither the maximum likelihood nor the method of moments solution, but an interesting combination of both.

\section{Preliminaries}

The basic problem of statistical inference is as follows. We are given samples from an unknown underlying distribution $\pdata$. Let $\hatpdata$ denote the empirical distribution of the input samples, our goal is to find a distribution $Q$ in a distribution class $\calQ$ to approximate $\pdata$. 

The problem is typically solved by using an objective function $D(\hatpdata, Q)$ that measures how well $Q$ fits the data, and then finding a $Q^*$ as follows: 


\begin{align}
    Q^* = \argmin_{Q\in\calQ}D\left(\hatpdata, Q\right).\label{eq: distancefunction}
\end{align}

Here, large $D$ means that $Q$ fits the data poorly, and different choices of $D$ lead to different inference solutions. 

\subsection{Background: Maximum Likelihood and Method of Moments}

Most classical statistical literature has looked at two major categories of inference methods -- maximum likelihood estimation and the method of moments.

\paragraph{Maximum Likelihood Estimation.} In {\em maximum likelihood estimation} (MLE), the goal is to select the distribution in $\calQ$ that maximizes the likelihood of generating the data  $\hatpdata$. For ease of discussion, let us assume that there is a base measure on the instance space, and $\hatpdata(x)$ and $Q(x)$ are  density functions of $\hatpdata$ and $Q$ respectively at $x$ with respect to this base measure. The goal of maximum likelihood estimation is to find:
\begin{align*}
    Q^* = \argmax_{Q\in\calQ}\prod_{x\in\hatpdata}Q(x).
\end{align*}
Since $\hatpdata$ is fixed, this is equivalent to finding the minimizer of:
\begin{align*}
    \frac{1}{\abs{\hatpdata}}\left(\log\prod_{x\in\hatpdata}\hatpdata(x) - \log\prod_{x\in\hatpdata}Q(x)\right) &=  \frac{1}{\abs{\hatpdata}}\sum_{x\in\hatpdata}\log\frac{\hatpdata(x)}{Q(x)} \\
    &= \kl(\hatpdata, Q).
\end{align*}
Thus the objective function $D$ in \eqref{eq: distancefunction} for MLE is the KL-divergence.

\paragraph{Method of Moments.} An alternative method for statistical inference, which dates back to Chebyshev, and has recently seen renewed interest, is the method of moments. In the {\em generalized method of moments} (GMM) \cite{hansen1982large}, in addition to the data $\hatpdata$ and the distribution class $\calQ$, we are given a set of relevant feature functions $\varphi = (\varphi_1, \varphi_2, \cdots, \varphi_n)$ over the instance space. The goal is to find the minimizer:
\begin{align*}
    Q^* = \argmin_{Q\in\calQ}\,\norm[\Big]{\bbE_{\hatpdata}[\varphi] - \bbE_{Q}[\varphi]}_2.
\end{align*}
Thus, for GMM, the objective function $D(\hatpdata, Q)$ in \eqref{eq: distancefunction} is $ \norm[\Big]{\bbE_{\hatpdata}[\varphi] - \bbE_{Q}[\varphi]}_2$.

Our goal is to understand how the solutions provided by GANs relate to these two standard ways of doing inference. 

\subsection{\texorpdfstring{$f$-Divergences and $f$-GANs}{f-Divergences and f-GANs}}

For the rest of the paper, we assume that we have an underlying probability space $(\Omega, \Sigma)$; all distributions we consider below are measures over this space. 

\begin{definition}[$f$-divergence,  \cite{ali1966general, csiszar1967information}]
Suppose $f: (-\infty, \infty)\to(-\infty, \infty]$ is a lower semi-continuous convex function such that $f(1) = 0$, $f$ is finite in some neighbourhood of $1$, and $f(x) = \infty$ for any $x < 0$. Let $P$ and $Q$ be probability measures over $(\Omega, \Sigma)$ where $P$ is absolutely continuous with respect to $Q$. Then, the {\em $f$-Divergence} of $P$ from $Q$ is defined as:
\begin{align}
    D_f(P||Q) \bydef \int_{\Omega} f\left(\frac{\diff P}{\diff Q}\right)\diff Q.\label{eq: f-div-definition-old}
\end{align}
\end{definition}

 Let $f^*: \bbR\to[-\infty, \infty]$ be the convex conjugate function of $f$, given by: $f^*(s) = \sup_{x \in \bbR} x^{\intercal} s - f(x)$; it is well-known that the $f$-divergences also have a variational formulation \cite{keziou2003dual,nguyen2010estimating} under certain conditions:
\begin{align}
    D_f(P||Q) &= \sup_{h} \bbE_{x\sim P}[h(x)] - \bbE_{x\sim Q}[f^*(h(x))],\label{eq: informal}
\end{align}
where the supremum is taken over, informally speaking, all possible functions. More details will be discussed in Section~\ref{sec: dual-f}.

Inspired by this variational formulation, \cite{Nowozin2016f} introduces a family of GANs, called {\em f-GAN}s, that use an $f$-divergnece $D_f$ as the objective function $D$ in \eqref{eq: distancefunction}. Inference is then formulated as solving the following minimax problem:
\begin{align}
    Q^* = \argmin_{Q\in\calQ}D_f\left(\hatpdata||Q\right) \approx \argmin_{Q\in\calQ}\sup_{h\in H} \bbE_{x\sim \hatpdata}[h(x)] - \bbE_{x\sim Q}[f^*(h(x))],\label{eq: f-gan-form}
\end{align}
where $H$ is a sufficiently large function class.

The standard GAN~\cite{goodfellow2014generative} is a special case of \eqref{eq: f-gan-form}, where $f(x) = x\log x - (x + 1)\log(x + 1)$, which corresponds to the Jensen-Shannon Divergence.

\subsection{\texorpdfstring{Restricted $f$-divergences and Restricted $f$-GANs}{Restricted f-divergences and Restricted f-GANs}}
To reduce the sample requirement \cite{arora2017generalization}, one might want to restrict the discriminator class $H$ in \eqref{eq: f-gan-form} to be a relatively small function class. To this end, we define the {\em restricted $f$-divergence}\footnote{Note that \cite{ruderman2012tighter} also uses the term ``restricted $f$-divergence", but for a very different purpose.}
\begin{align}
    D_{f, H} (P||Q) \bydef \sup_{h\in H} \bbE_{x\sim P}[h(x)] - \bbE_{x\sim Q}[f^*(h(x))].\label{eq: hard}
\end{align}

In practice, the discriminator class is often implemented by a neural network~\cite{Nowozin2016f}, therefore f-GANs are in fact {\em restricted f-GANs} that solve the following minimax problem\begin{align}
    Q^* = \argmin_{Q\in\calQ}D_{f, H}\left(\hatpdata||Q\right) = \argmin_{Q\in\calQ}\sup_{h\in H} \bbE_{x\sim \hatpdata}[h(x)] - \bbE_{x\sim Q}[f^*(h(x))].
\end{align}

A special case of \eqref{eq: hard} is {\em linear f-divergence}, introduced in~\cite{liu2017approximation}. Specifically, given a vector of feature functions $\varphi = (\varphi_1, \varphi_2, \cdots, \varphi_n)$ over the data domain, let $A$ be a convex set of $\bbR^n$, define 
\begin{align*}
    D_{f, \varphi, A}(P||Q)\bydef \sup_{a\in A, b\in\bbR} \bbE_{x\sim P}\left[a^{\intercal}\varphi(x) + b\right] - \bbE_{x\sim Q}\left[f^*\left(a^{\intercal}\varphi(x) + b\right)\right].
\end{align*}
$f$-GANs that solve $\argmin_{Q\in\calQ} D_{f, \varphi, A}(\hatpdata||Q)$ are called {\em linear f-GANs}.

\section{Main Result}

We begin with stating our main result in its most general form. 

\subsection{Additional Notations}
 We start out by introducing some notation. 
Recall that we have an underlying probability space $(\Omega, \Sigma)$. Let $B(\Omega, \Sigma)$ be the set of all real-valued bounded and measurable functions on $(\Omega, \Sigma)$ equipped with the topology induced by the uniform norm. 

 We use $ba(\Omega, \Sigma)$ to denote the set of all bounded and finitely additive signed measures over $(\Omega, \Sigma)$, $\calP(\Omega, \Sigma)$ to denote the set of all finitely additive probability measures over $(\Omega, \Sigma)$, and $\calP_c(\Omega, \Sigma)$ to denote the set of all countably additive probability measures over $(\Omega, \Sigma)$. Note that $\calP_c(\Omega, \Sigma)\subseteq \calP(\Omega, \Sigma)\subseteq ba(\Omega, \Sigma)$.

For any $\mu$ and $\nu\in ba(\Omega, \Sigma)$, we write $\mu\ll \nu$ to denote that $\mu$ is absolutely continuous w.r.t. $\nu$; that is, for any $E\in\Sigma$, $\nu(E) = 0\implies \mu(E) = 0$. Furthermore, if both $\mu$ and $\nu$ are countably additive, we use $\frac{\diff \mu}{\diff \nu}$ to denote the Radon-Nikodym derivative.

We extend definition \eqref{eq: f-div-definition-old} such that now $P$ can be a {\em{finitely additive}} probability measure that is {\em not necessarily} absolutely continuous w.r.t. $Q$.  Formally, for any $P\in \calP(\Omega, \Sigma)$ and $Q\in\calP_c(\Omega, \Sigma)$, define
\begin{align}
    \bar{D}_f(P||Q) &\bydef \begin{dcases}
    D_f(P||Q),\hspace{3em}\text{if $P,Q\in \calP_c(\Omega, \Sigma)$ and $P\ll Q$,}\\
    \sup_{h\in B(\Omega, \Sigma)} \bbE_{x\sim P}[h(x)] - \bbE_{x\sim Q}[f^*(h(x))],  \hspace{2.3
em} \text{otherwise}
    \end{dcases}\label{eq: f-div-definition-new}\\
    & \stackrel{(a)}{=} \sup_{h\in B(\Omega, \Sigma)} \bbE_{x\sim P}[h(x)] - \bbE_{x\sim Q}[f^*(h(x))],\label{eq: to-be-justified}
\end{align} 
where the equality (a) is justified by Theorem~\ref{thm: extension} in Section~\ref{sec: dual-f}, which is a rigorous version of \eqref{eq: informal}.

\subsection{General Result}
We begin with a slight generalization of the definition of restricted $f$-divergences $D_{f, H}$ in \eqref{eq: hard}. Let the functional $\calH: B(\Omega, \Sigma)\to[-\infty, \infty]$ be a regularizer, we can define
\begin{align}
    D_{f, \calH} (P||Q) \bydef \sup_{h\in B(\Omega, \Sigma)} \bbE_{x\sim P}[h(x)] - \bbE_{x\sim Q}[f^*(h(x))] - \calH(h).\label{eq: soft}
\end{align}
To see why \eqref{eq: soft} is a more general definition than \eqref{eq: hard}, let $H\subseteq B(\Omega, \Sigma)$ and define $\calH_H$ to be
\begin{align*}
    \calH_H(h) \bydef \begin{dcases}
                   0, &\text{if $h\in H$,}\\
                   \infty, &\text{otherwise,}
               \end{dcases}
\end{align*}
then we have $D_{f, \calH_H} = D_{f, H}$.

An important property of the functional $\calH$ is {\em shift-invariance}.
\begin{definition}[shift invariant]
$\calH$ is said to be shift invariant if for any $h\in B(\Omega, \Sigma)$ and $b\in\bbR$, $\calH (h) = \calH (h + b)$.
\end{definition}

We are also interested in the convex conjugate of $\calH$, denoted by $\calH^*$. According to  Theorem~\ref{thm: dual} in the appendix,  the functional $\calH^*$, although defined on $B(\Omega, \Sigma)^*$ by definition, can be equivalently defined on $ba(\Omega, \Sigma)$ such that
\begin{align*}
    \calH^*(\mu) = \sup_{h\in B(\Omega, \Sigma)} \int_{\Omega} h \diff \mu - \calH(h). 
\end{align*}

We are now ready for our main result.

\begin{theorem}\label{thm: main}
If $\calH$ is convex and shift invariant, $P\in \calP(\Omega, \Sigma)$, and $Q\in\calP_c(\Omega, \Sigma)$, then
\begin{align*}
    D_{f, \calH}(P||Q) &= \inf_{P'\in \calP(\Omega, \Sigma)} \, \calH^*(P - P') + \bar{D}_f(P'||Q)\\
    &= \inf_{\substack{P'\in \calP(\Omega, \Sigma)\\P'\ll Q}} \, \calH^*(P - P') + \bar{D}_f(P'||Q)
\end{align*}
\end{theorem}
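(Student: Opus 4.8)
The plan is to recognize the identity as a Fenchel-duality (infimal-convolution) statement, after which the ``$P'\ll Q$'' restriction and the passage from $ba(\Omega,\Sigma)$ to $\calP(\Omega,\Sigma)$ both fall out of the shift-invariance of $\calH$. \emph{Weak duality first.} For any $h\in B(\Omega,\Sigma)$ and any $P'\in\calP(\Omega,\Sigma)$, the definitions of $\calH^*$ and of $\bar{D}_f(\cdot\|Q)$ (via \eqref{eq: to-be-justified}) give $\calH^*(P-P')\ge\int h\,\diff(P-P')-\calH(h)$ and $\bar{D}_f(P'\|Q)\ge\bbE_{P'}[h]-\bbE_Q[f^*(h)]$; adding and taking the supremum over $h$ gives $\calH^*(P-P')+\bar{D}_f(P'\|Q)\ge D_{f,\calH}(P\|Q)$ for \emph{every} $P'$. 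Hence both infima on the right are $\ge D_{f,\calH}(P\|Q)$, and since the infimum over $\{P'\ll Q\}$ runs over a subset of $\calP(\Omega,\Sigma)$ it is at least the infimum over all of $\calP(\Omega,\Sigma)$. So it suffices to prove the one reverse inequality $D_{f,\calH}(P\|Q)\ge\inf_{P'\in\calP(\Omega,\Sigma),\,P'\ll Q}\big[\calH^*(P-P')+\bar{D}_f(P'\|Q)\big]$; the three quantities are then squeezed equal.

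\emph{Fenchel--Rockafellar.} Put $A(h):=\bbE_Q[f^*(h)]$ on the Banach space $B(\Omega,\Sigma)$, whose dual is $ba(\Omega,\Sigma)$ under the pairing $\langle h,\mu\rangle=\int h\,\diff\mu$ (Theorem~\ref{thm: dual}). Then $A$ is convex (since $f^*$ is), $D_{f,\calH}(P\|Q)=(A+\calH)^*(P)$ by \eqref{eq: soft}, and $A^*=\bar{D}_f(\cdot\|Q)$ on $ba(\Omega,\Sigma)$ directly from \eqref{eq: to-be-justified}. I would apply the Fenchel--Rockafellar theorem in infimal-convolution form to obtain $(A+\calH)^*(P)=\min_{\mu\in ba(\Omega,\Sigma)}\big[A^*(\mu)+\calH^*(P-\mu)\big]$ with the minimum attained. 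The qualification to verify is that $A$ is norm-continuous at some point of $\operatorname{dom}\calH$: fix a subgradient $g\in\partial f(1)$ (it exists since $f$ is finite near $1$); the subgradient inequality together with $f\equiv\infty$ on $(-\infty,0)$ gives $f(x)\ge gx-g$ for all $x$, hence $f^*(s)\le g$ whenever $s\le g$, so $f^*$ is finite, nondecreasing and locally Lipschitz on $(-\infty,g)$ and therefore $A$ is continuous at any $h_0$ with $\sup h_0<g$; by shift-invariance any point of $\operatorname{dom}\calH$ can be lowered by a constant to satisfy this, so the qualification holds as soon as $\operatorname{dom}\calH\ne\emptyset$. (The degenerate cases are immediate: if $\calH\equiv\infty$ then $\calH^*\equiv-\infty$ and both sides are $-\infty$; if $\calH$ takes the value $-\infty$ then $\calH^*\equiv\infty$ and, using again a constant shift into $\{\sup h<g\}$, both sides are $+\infty$.)

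\emph{Pinning down the minimizer.} Two elementary facts force any $\mu$ with $A^*(\mu)+\calH^*(P-\mu)<\infty$ to be a finitely additive probability measure with $\mu\ll Q$. (a) If $\mu$ is not a nonnegative measure with $\mu\ll Q$ then $A^*(\mu)=+\infty$: if $\mu(E)<0$, testing with $h=-c\,\mathbf{1}_E+g\,\mathbf{1}_{E^c}$ (for $c\ge-g$) keeps $A(h)\le g$ bounded while $\int h\,\diff\mu=c\,|\mu(E)|+g\,\mu(E^c)\to\infty$; if $Q(E)=0<\mu(E)$, testing with $h=c\,\mathbf{1}_E+g\,\mathbf{1}_{E^c}$ keeps $A(h)=f^*(g)\,Q(E^c)$ constant (with the convention $0\cdot\infty=0$) while $\int h\,\diff\mu\to\infty$. (b) Replacing $h$ by $h+b$ in the definition of $\calH^*$ and using shift-invariance gives $\calH^*(\nu)=\calH^*(\nu)+b\,\nu(\Omega)$ for all $b\in\bbR$, whence $\calH^*(\nu)<\infty\Rightarrow\nu(\Omega)=0$, so $\calH^*(P-\mu)<\infty\Rightarrow\mu(\Omega)=1$. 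Therefore, when $D_{f,\calH}(P\|Q)<\infty$, the minimizer $\mu^*$ lies in $\{P'\in\calP(\Omega,\Sigma):P'\ll Q\}$ and realizes $D_{f,\calH}(P\|Q)=\bar{D}_f(\mu^*\|Q)+\calH^*(P-\mu^*)$, which is the desired reverse inequality; and if $D_{f,\calH}(P\|Q)=\infty$ the reverse inequality is already contained in the easy direction.

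\emph{Main obstacle.} I expect the crux to be the rigorous use of Fenchel--Rockafellar in this non-reflexive setting — properness of $A$ and $\calH$, the continuity-type constraint qualification, attainment of the dual — together with the careful $\pm\infty$ and $0\cdot\infty$ bookkeeping (including the two degenerate $\calH$ cases); the rest is weak duality and the two forcing lemmas. A route matching the Fenchel--Moreau / Ky~Fan toolkit advertised in the introduction would instead write $-f^*(s)=\inf_t[f(t)-st]$, interchange this pointwise infimum with $\bbE_Q$ to get $D_{f,\calH}(P\|Q)=\sup_h\inf_g\big[\bbE_P[h]-\calH(h)+\bbE_Q[f(g)]-\bbE_Q[hg]\big]$, swap $\sup_h$ and $\inf_g$ by Ky~Fan's minimax theorem, and recognize the inner supremum as $\calH^*(P-gQ)$ and $\bbE_Q[f(g)]$ as $\bar{D}_f(gQ\|Q)$ — the measure-theoretic minimax interchange being the hard step there.
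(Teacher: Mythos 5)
Your proof is correct, but it takes a genuinely different route from the paper's. The paper introduces the normalized functional $\calR(h)=\inf_{b\in\bbR}\bbE_Q[f^*(h+b)]-b$ as in \eqref{eq: R-definition}, proves it is proper, convex and lower semi-continuous (the lower semi-continuity being the delicate part, requiring the compactness argument for the shifts $b_n$ in Lemmas~\ref{lem: utility}--\ref{lem: attained} together with the integral-functional conjugacy of Theorem~\ref{thm: roc68}), applies Fenchel--Moreau to write $\calR=\calR^{**}$, computes $\calR^*$ case by case (Lemma~\ref{lem: Rdual}), and finally exchanges a $\sup$--$\inf$ via Ky Fan's minimax theorem, which needs the weak* compactness of $\calP(\Omega,\Sigma)$ from Banach--Alaoglu (Lemma~\ref{lem: kyfan}). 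You instead apply the Moreau--Rockafellar sum-conjugate formula directly to $A+\calH$ with $A(h)=\bbE_Q[f^*(h)]$, so that $(A+\calH)^*(P)=\min_{\mu\in ba(\Omega,\Sigma)}\bigl[A^*(\mu)+\calH^*(P-\mu)\bigr]$ with the minimum attained; the single continuity qualification --- $A$ norm-continuous at some point of $\operatorname{dom}\calH$, which you verify correctly via a subgradient $g\in\partial f(1)$ (giving $f^*\le g$ on $(-\infty,g]$, hence local Lipschitzness of $A$ on $\{\sup h<g\}$) plus shift-invariance --- replaces both the lower-semicontinuity machinery for $\calR$ and the minimax/compactness step. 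Your two ``forcing'' observations are also correct and play the role of Lemma~\ref{lem: Rdual}: nonnegativity and $\mu\ll Q$ are extracted from $A^*$ by indicator tests exactly as in the paper's Cases 1--2, while the normalization $\mu(\Omega)=1$ comes out of $\calH^*$ via shift-invariance rather than out of the $b$-infimum in $\calR$ as in the paper's Case 3. Two things your route buys: attainment of the optimal $P'$ (stated as a $\min$, which the paper does not claim), and explicit treatment of the improper-$\calH$ degeneracies; what it costs is reliance on exact infimal-convolution duality on the non-reflexive space $B(\Omega,\Sigma)$, which is nonetheless standard (e.g.\ Theorem 2.8.7 of the already-cited \cite{zalinescu2002convex}). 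One cosmetic caveat: you write $A^*=\bar{D}_f(\cdot\|Q)$ on all of $ba(\Omega,\Sigma)$, whereas \eqref{eq: to-be-justified} only defines $\bar{D}_f(P'\|Q)$ for $P'\in\calP(\Omega,\Sigma)$; this is harmless since your forcing step confines the minimizer to $\calP(\Omega,\Sigma)\cap\{P'\ll Q\}$ before the identification is needed, but it should be phrased as a definition of $A^*$ rather than an appeal to \eqref{eq: to-be-justified}.
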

We remark here that when $Q$ has finite support and $\lambda$ takes value $\infty$ outside a RKHS space, Theorem~\ref{thm: main} basically reduces to Theorem 2 in \cite{ruderman2012tighter}; and in this special case the proof can be greatly simplified.

Returning to the special case of $D_{f, H}$, recall that in this case
\begin{align*}
    \calH(h) = \begin{dcases}
                   0, &\text{if $h\in H$,}\\
                   \infty, &\text{otherwise,}
               \end{dcases}
\end{align*}
and note that 
\begin{align*}
    \lambda^*(P - P') &= \sup_{h\in B(\Omega, \Sigma)}\bbE_{P - P'}[h] - \lambda(h)\\
    & = \sup_{h\in H}\bbE_{P}[h] - \bbE_{P'}[h],
\end{align*}
we have the following corollary.

\begin{corollary}\label{cor: hard}
If $H$ is a convex subset of $B(\Omega, \Sigma)$ and for any $h\in H$, $b\in\bbR$, we have $h + b\in H$,  then for any $P\in \calP(\Omega, \Sigma)$ and $Q\in\calP_c(\Omega, \Sigma)$, 
\begin{align*}
    D_{f, H}(P||Q) &= \inf_{P'\in \calP(\Omega, \Sigma)} \, \bar{D}_f(P'||Q) + \sup_{h\in H}\bbE_{P}[h] - \bbE_{P'}[h]\\
    &= \inf_{\substack{P'\in \calP(\Omega, \Sigma)\\P'\ll Q}} \, \bar{D}_f(P'||Q) + \sup_{h\in H}\bbE_{P}[h] - \bbE_{P'}[h] 
\end{align*}
\end{corollary}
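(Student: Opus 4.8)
The plan is to obtain Corollary~\ref{cor: hard} as a direct specialization of Theorem~\ref{thm: main} to the indicator regularizer $\calH = \calH_H$, so the only real work is to check that $\calH_H$ meets the hypotheses of the theorem and to identify its convex conjugate with the supremum over $H$ appearing in the statement.

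First I would recall that $D_{f, \calH_H} = D_{f, H}$ by construction: substituting $\calH_H$ into \eqref{eq: soft} forces the supremum to be taken over $h \in H$, since any $h \notin H$ contributes $-\infty$, which recovers \eqref{eq: hard}. Next I would verify the two structural assumptions of Theorem~\ref{thm: main}. Convexity of $\calH_H$ is immediate: it is the convex indicator of the convex set $H \subseteq B(\Omega, \Sigma)$. For shift-invariance, observe that the hypothesis that $h + b \in H$ whenever $h \in H$ and $b \in \bbR$ is in fact an equivalence, since applying it with $-b$ in place of $b$ yields the converse implication; hence $h \in H \iff h + b \in H$, so $\calH_H(h) = \calH_H(h + b)$ for all $h \in B(\Omega, \Sigma)$ and $b \in \bbR$, which is exactly shift-invariance.

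Then I would invoke the conjugate computation that already precedes the corollary. By Theorem~\ref{thm: dual}, $\calH_H^*$ can be evaluated on $ba(\Omega, \Sigma)$, and since $P - P'$ is a difference of finitely additive probability measures — hence itself bounded and finitely additive — we have
\begin{align*}
    \calH_H^*(P - P') &= \sup_{h \in B(\Omega, \Sigma)} \int_{\Omega} h \diff(P - P') - \calH_H(h)\\
    &= \sup_{h \in H} \bbE_{P}[h] - \bbE_{P'}[h].
\end{align*}
Substituting this identity into the two equalities provided by Theorem~\ref{thm: main} yields precisely the two displayed equalities of Corollary~\ref{cor: hard}, completing the proof.

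The argument is essentially bookkeeping, and the only point deserving care is that the conjugate $\calH_H^*$ lives a priori on the dual space $B(\Omega, \Sigma)^*$; it is Theorem~\ref{thm: dual} that licenses restricting attention to $ba(\Omega, \Sigma)$, and in particular evaluating $\calH_H^*$ at $P - P'$. I would also note that $H$ is implicitly assumed nonempty; otherwise $\calH_H \equiv \infty$, its conjugate is identically $-\infty$, and the identity degenerates. Beyond these remarks, no genuine analytic obstacle remains once Theorem~\ref{thm: main} is in hand.
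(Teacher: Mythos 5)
Your proposal is correct and follows exactly the route the paper takes: the corollary is obtained by specializing Theorem~\ref{thm: main} to the indicator regularizer $\calH_H$ and identifying $\calH_H^*(P - P')$ with $\sup_{h\in H}\bbE_{P}[h] - \bbE_{P'}[h]$, which is precisely the computation the paper displays immediately before stating the corollary. Your added checks of convexity and shift-invariance of $\calH_H$ (and the remark that $H$ must be nonempty) are sound and only make explicit what the paper leaves implicit.
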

We would like to point out that if we take $H$ to be $B(\Omega, \Sigma)$ in Corollary~\ref{cor: hard}, then
\begin{align*}
    \sup_{h\in H}\bbE_{P}[h] - \bbE_{P'}[h] = \begin{dcases}
    0, &\textnormal{if $P' = P$,}\\\infty, &\textnormal{otherwise,}
    \end{dcases}
\end{align*}
and we recover Theorem~\ref{thm: extension}.

\subsection{\texorpdfstring{Implication for Linear $f$-GANs}{Implication for Linear f-GANs}}

Finally, because of its importance, it is worth emphasizing the special case of linear $f$-GANs. Recall that linear f-GANs minimize the objective $\min_{Q\in\calQ} D_{f, \varphi, A}(\hatpdata||Q)$ where $\varphi = (\varphi_1, \varphi_2, \cdots, \varphi_n)$ and $A$ is a convex subset of $\bbR^n$. In this case, take $H$ in Corollary~\ref{cor: hard} to be $\left\{a^{\intercal}\varphi: a\in A\right\}$, then
\begin{align}
    \sup_{h\in H}\bbE_{P}[h] - \bbE_{P'}[h] = \sup_{a\in A}a^{\intercal}\left(\bbE_{P}[\varphi] - \bbE_{P'}[\varphi]\right).\label{eq: general-a}
\end{align}
In particular, if $A = \left\{x\in\bbR^n: \norm{x}_2\leq R\right\}$, then in \eqref{eq: general-a} using Cauchy-Schwarz inequality we have
\begin{align*}
    \sup_{a\in A}a^{\intercal}\left(\bbE_{P}[\varphi] - \bbE_{P'}[\varphi]\right) = R\cdot \norm{\bbE_{P}[\varphi] - \bbE_{P'}[\varphi]}_2.
\end{align*}
\begin{remark}
We would like to point out that more generally, for any $p, q\in [1, \infty]$ such that $1/p + 1/q = 1$, if $A = \left\{x\in\bbR^n: \norm{x}_p\leq R\right\}$, using H{\"o}lder's inequality, we have 
\begin{align*}
    \sup_{a\in A}a^{\intercal}\left(\bbE_{P}[\varphi] - \bbE_{P'}[\varphi]\right) = R\cdot \norm{\bbE_{P}[\varphi] - \bbE_{P'}[\varphi]}_q.
\end{align*}But to keep the discussion concise, we state the results with $p = 2$.
\end{remark}

As a consequence, we have the following corollary.

\begin{corollary}
If $A = \left\{x\in\bbR^n: \norm{x}_2\leq R\right\}$ where $R$ is a positive real number, then for any $P\in \calP(\Omega, \Sigma)$ and $Q\in\calP_c(\Omega, \Sigma)$, 
\begin{align*}
    D_{f, \varphi, A}(P||Q) &= \inf_{P'\in \calP(\Omega, \Sigma)} \, R\cdot \norm{\bbE_{P}[\varphi] - \bbE_{P'}[\varphi]}_2 + \bar{D}_f(P'||Q)\\
    &= \inf_{\substack{P'\in \calP(\Omega, \Sigma)\\P'\ll Q}} \, R\cdot \norm{\bbE_{P}[\varphi] - \bbE_{P'}[\varphi]}_2 + \bar{D}_f(P'||Q).
\end{align*}
\end{corollary}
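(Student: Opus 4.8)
The plan is to obtain this corollary as a direct specialization of Corollary~\ref{cor: hard}. First I would rewrite $D_{f,\varphi,A}$ in the form $D_{f,H}$ by setting $H = \{a^{\intercal}\varphi + b : a\in A,\ b\in\bbR\}$: the definition of $D_{f,\varphi,A}(P\|Q)$ is literally $\sup_{h\in H}\bbE_{x\sim P}[h(x)] - \bbE_{x\sim Q}[f^*(h(x))] = D_{f,H}(P\|Q)$, so the whole task reduces to checking that this particular $H$ meets the hypotheses of Corollary~\ref{cor: hard} and then evaluating the linear term that appears there.

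Next I would verify the hypotheses. Assuming each feature $\varphi_i$ is bounded and measurable (so that $a^{\intercal}\varphi + b \in B(\Omega,\Sigma)$ for every $a,b$), the set $H$ is a convex subset of $B(\Omega,\Sigma)$: it is the image under the linear map $(a,b)\mapsto a^{\intercal}\varphi + b$ of the convex set $A\times\bbR$. It is also closed under adding constants, since $(a^{\intercal}\varphi + b) + b' = a^{\intercal}\varphi + (b+b')$ with the same $a\in A$. Hence Corollary~\ref{cor: hard} applies and gives $D_{f,H}(P\|Q) = \inf_{P'\in\calP(\Omega,\Sigma)}\bar{D}_f(P'\|Q) + \sup_{h\in H}\bbE_P[h] - \bbE_{P'}[h]$, together with the same identity where the infimum is restricted to $P'\ll Q$.

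Then I would evaluate the supremum. Writing $h = a^{\intercal}\varphi + b$, it equals $\sup_{a\in A,\ b\in\bbR} a^{\intercal}\big(\bbE_P[\varphi] - \bbE_{P'}[\varphi]\big) + b\big(P(\Omega) - P'(\Omega)\big)$. Since $P$ and $P'$ are both (finitely additive) probability measures, $P(\Omega) = P'(\Omega) = 1$, so the term linear in $b$ vanishes and we are left with $\sup_{\norm{a}_2\le R} a^{\intercal} v$, where $v \bydef \bbE_P[\varphi] - \bbE_{P'}[\varphi]$. By Cauchy--Schwarz this supremum equals $R\,\norm{v}_2$ (attained at $a = Rv/\norm{v}_2$ when $v\ne 0$, and trivially $0$ when $v = 0$). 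Substituting back into the formula from Corollary~\ref{cor: hard} yields both displayed equalities.

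There is essentially no deep obstacle here; the statement is a packaging of Corollary~\ref{cor: hard}. The one point deserving a moment's care is the affine shift $b$: the definition of $D_{f,\varphi,A}$ carries an unconstrained intercept, and it is precisely because $P$ and $P'$ range over probability measures of total mass one that the $b$-dependence drops out --- otherwise the supremum over $b$ would be $+\infty$ unless $P$ and $P'$ had equal mass, and the reduction to the norm term would fail. I would also note in passing the more general $\ell_p/\ell_q$ version from the Remark, which follows identically with H\"older's inequality in place of Cauchy--Schwarz.
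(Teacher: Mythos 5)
Your proposal is correct and follows essentially the same route as the paper: specialize Corollary~\ref{cor: hard} to the linear discriminator class, observe that the intercept term drops out because $P$ and $P'$ have total mass one, and evaluate the remaining supremum over the $\ell_2$ ball by Cauchy--Schwarz. If anything you are slightly more careful than the paper's own exposition, which informally writes $H = \{a^{\intercal}\varphi : a\in A\}$ even though the closure-under-constants hypothesis of Corollary~\ref{cor: hard} requires including the intercepts as you do.
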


Observe that when $f(x) =  x\ln(x)$ and $P'\ll Q$, $\bar{D}_f(P'||Q)$ is the extended  $\kl$-divergence, which we denote as $\widebar{\kl}(P'||Q)$. Specifically, 
\begin{align*}
    \widebar{\kl}(P'||Q)\bydef
    \begin{dcases}
    \kl(P'||Q),&\text{if $P\in \calP_c(\Omega, \Sigma)$},\\
     \sup_{h\in B(\Omega, \Sigma)} \bbE_{x\sim P}[h(x)] - \bbE_{x\sim Q}\left[\exp(h(x) - 1)\right],&\text{otherwise.}
    \end{dcases}
\end{align*}

Therefore, we have that when $f(x) =  x\ln(x)$ and $A = \left\{x\in\bbR^n: \norm{x}_2\leq R\right\}$,
\begin{align*}
    D_{f, \varphi, A}(P||Q) = \inf_{\substack{P'\in \calP(\Omega, \Sigma)\\P'\ll Q}} \, R\cdot \norm{\bbE_{P}[\varphi] - \bbE_{P'}[\varphi]}_2 + \widebar{\kl}(P'||Q).
\end{align*}

Contrasting with maximum likelihood and method of moments estimators, the linear KL-GANs are an interesting combination of both when there is model mismatch.  Table~\ref{table: cmp} provides a summary of the differences. 

\begin{table}
  \caption{Linear KL-GAN combines MLE and GMM}
  \label{table: cmp}
  \vspace{1em}
  \centering
 \begin{tabular}{c | c | c | c }

   & MLE & GMM & Linear KL-GAN \\ \hline
  \rule{0pt}{3.6ex}
  $D(\hatpdata, Q)$ & $\kl(\hatpdata, Q)$ & $\norm[\Big]{\bbE_{\hatpdata}[\varphi] - \bbE_{Q}[\varphi]}_2$ &   $\inf\limits_{\substack{P'\in \calP(\Omega, \Sigma)\\P'\ll Q}} R\cdot \norm[\Big]{\bbE_{\hatpdata}[\varphi] - \bbE_{P'}[\varphi]}_2 + \widebar{\kl}(P'||Q)$

\end{tabular}
\end{table}

It is also possible to consider the case where $R = \infty$, which means $A = \bbR^n$. In this case 
\begin{align*}
    \sup_{a\in A}a^{\intercal}\left(\bbE_{P}[\varphi] - \bbE_{P'}[\varphi]\right) = \begin{dcases}
    0, &\text{if $\bbE_{P'}[\varphi] = \bbE_{P}[\varphi]$},\\
    \infty, &\text{otherwise.}
    \end{dcases}
\end{align*}
This will result in the following corollary.
\begin{corollary}\label{cor: r-inf}
If $A = \bbR^n$, then for any $P\in \calP(\Omega, \Sigma)$ and $Q\in\calP_c(\Omega, \Sigma)$, 
\begin{align*}
    D_{f, \varphi, A}(P||Q) &= \inf_{\substack{P'\in \calP(\Omega, \Sigma)\\\bbE_{P'}[\varphi] = \bbE_{P}[\varphi]}} \,\bar{D}_f(P'||Q)\\
    &= \inf_{\substack{P'\in \calP(\Omega, \Sigma)\\\bbE_{P'}[\varphi] = \bbE_{P}[\varphi]\\P'\ll Q}} \,\bar{D}_f(P'||Q).
\end{align*}
\end{corollary}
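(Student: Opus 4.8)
The plan is to specialize Corollary~\ref{cor: hard} to the discriminator class $H = \left\{a^{\intercal}\varphi + b : a\in\bbR^n,\, b\in\bbR\right\}$, exactly as was done to obtain the preceding (unlabeled) corollary, but now plugging $A = \bbR^n$ into the evaluation of the inner supremum.

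First I would check that $H$ satisfies the hypotheses of Corollary~\ref{cor: hard}. Since each $\varphi_i$ is a bounded measurable function, every $a^{\intercal}\varphi + b$ lies in $B(\Omega,\Sigma)$, so $H\subseteq B(\Omega,\Sigma)$; moreover $H$ is the image of the convex set $\bbR^n\times\bbR$ under the linear map $(a,b)\mapsto a^{\intercal}\varphi + b$, hence convex, and it is manifestly closed under adding real constants. By the definition of $D_{f,\varphi,A}$ with $A = \bbR^n$ we have $D_{f,\varphi,\bbR^n}(P||Q) = D_{f,H}(P||Q)$, so Corollary~\ref{cor: hard} applies and gives
\[
D_{f,\varphi,\bbR^n}(P||Q) = \inf_{P'\in\calP(\Omega,\Sigma)} \bar{D}_f(P'||Q) + \sup_{h\in H}\bbE_{P}[h] - \bbE_{P'}[h] = \inf_{\substack{P'\in\calP(\Omega,\Sigma)\\ P'\ll Q}} \bar{D}_f(P'||Q) + \sup_{h\in H}\bbE_{P}[h] - \bbE_{P'}[h].
\]

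Next I would evaluate the inner supremum. Since $P$ and $P'$ are both probability measures, $\bbE_{P}[b] = \bbE_{P'}[b] = b$, so the constant term cancels and $\sup_{h\in H}\left(\bbE_{P}[h] - \bbE_{P'}[h]\right) = \sup_{a\in\bbR^n} a^{\intercal}\left(\bbE_{P}[\varphi] - \bbE_{P'}[\varphi]\right)$, which is \eqref{eq: general-a} with $A = \bbR^n$. A linear functional $a\mapsto a^{\intercal} v$ ranging over all of $\bbR^n$ is unbounded above unless $v = 0$, so this supremum equals $0$ when $\bbE_{P'}[\varphi] = \bbE_{P}[\varphi]$ and $+\infty$ otherwise, which is precisely the displayed computation immediately preceding the statement.

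Finally I would substitute this back: adding to $\bar{D}_f(P'||Q)$ a term that vanishes on the set $\{P' : \bbE_{P'}[\varphi] = \bbE_{P}[\varphi]\}$ and equals $+\infty$ off it confines each of the two infima to that set, yielding
\[
D_{f,\varphi,\bbR^n}(P||Q) = \inf_{\substack{P'\in\calP(\Omega,\Sigma)\\ \bbE_{P'}[\varphi] = \bbE_{P}[\varphi]}} \bar{D}_f(P'||Q) = \inf_{\substack{P'\in\calP(\Omega,\Sigma)\\ \bbE_{P'}[\varphi] = \bbE_{P}[\varphi]\\ P'\ll Q}} \bar{D}_f(P'||Q),
\]
as claimed. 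There is no serious obstacle here; the argument is a direct specialization of Corollary~\ref{cor: hard}. The only points that warrant a moment of care are the standing assumption that the $\varphi_i$ are bounded, so that $H\subseteq B(\Omega,\Sigma)$, and the bookkeeping when the constraint set is empty: in that case the right-hand infima are $+\infty$ by convention, which is automatically consistent with the left-hand side since the displayed equalities are supplied by Corollary~\ref{cor: hard} itself.
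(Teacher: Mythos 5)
Your proposal is correct and follows essentially the same route as the paper: specialize Corollary~\ref{cor: hard} to the affine class $H = \{a^{\intercal}\varphi + b\}$, evaluate the inner supremum via \eqref{eq: general-a} with $A = \bbR^n$ as $0$ or $+\infty$ according to whether the moments match, and observe that the $+\infty$ penalty confines both infima to the moment-matching set. Your explicit check that $H$ is convex and closed under adding constants (the paper states $H$ without the $+b$ term, slightly loosely) is a welcome bit of extra care but not a departure from the paper's argument.
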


\subsection{Variational Representation of f-divergences}\label{sec: dual-f}

In this section, we will explain why equality (a) in \eqref{eq: to-be-justified} holds. The following theorem, which is complementary to Theorem 2.1 in \cite{keziou2003dual} and Lemma 1 in \cite{nguyen2010estimating}\footnote{We would like to note two things here. First, the ``only if" part of Lemma 1 in \cite{nguyen2010estimating} is unproved, and does not hold, therefore our result does not contradict theirs. Second, while both \cite{keziou2003dual} and \cite{nguyen2010estimating} mention that the supremum can be attained at $\partial f\left(\frac{\diff P}{\diff Q}\right)$, this sub-differential may not exist (especially when $f$ can take value $\infty$), and even if is well-defined everywhere needed, it is possible that the sub-differential is not bounded, hence not in $B(\Omega, \Sigma)$; therefore, their results do not imply ours.}, gives a rigorous variational representation of the $f$-divergence.
 
\begin{theorem}\label{thm: extension}
For any probability measures $P$ and $Q$ over $(\Omega, \Sigma)$ such that $P$ is absolutely continuous with respect to $Q$,
\begin{align}
    D_f(P||Q) &= \sup_{h\in B(\Omega, \Sigma)} \bbE_{x\sim P}[h(x)] - \bbE_{x\sim Q}[f^*(h(x))].\label{eq: dual-f-div}
\end{align}
\end{theorem}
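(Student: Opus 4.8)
The plan is to prove the two inequalities in~\eqref{eq: dual-f-div} separately. Write $r \bydef \diff P/\diff Q$; it exists, is $Q$-integrable, and satisfies $\bbE_Q[r]=1$ because $P$ is absolutely continuous with respect to $Q$. The bound $D_f(P||Q)\ge\sup_h(\cdots)$ is immediate from the pointwise Fenchel--Young inequality $h(x)r(x)-f^*(h(x))\le f(r(x))$, which holds for every $h\in B(\Omega,\Sigma)$: integrating against $Q$ and using $\bbE_P[h]=\bbE_Q[hr]$ yields $\bbE_P[h]-\bbE_Q[f^*(h)]\le\bbE_Q[f(r)]=D_f(P||Q)$. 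Here $\bbE_Q[f^*(h)]$ is well defined in $(-\infty,\infty]$ since $f^*(s)\ge s$ forces the negative part of $f^*\!\circ h$ to be bounded, and if it equals $+\infty$ the left-hand side is $-\infty$; taking the supremum over $h$ finishes this direction.

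For the reverse inequality we must exhibit bounded, near-optimal test functions. Picking $h\in\partial f(r)$ would make Fenchel--Young tight, but such a selection may fail to exist (where $r$ lands on a boundary point of $\mathrm{dom}\,f$ at which $f$ has infinite slope) or be unbounded, so we truncate. Fix compact intervals $[a_n,b_n]\uparrow\mathrm{int}(\mathrm{dom}\,f)$ with $1\in(a_n,b_n)$ for all $n$, put $r_n\bydef\mathrm{med}(a_n,r,b_n)$ and $h_n\bydef f'_+(r_n)$, a bounded measurable selection of $\partial f(r_n)$ (so $h_n\in B(\Omega,\Sigma)$). Since $h_n(x)\in\partial f(r_n(x))$ we have $f^*(h_n(x))=r_n(x)h_n(x)-f(r_n(x))$, so with $g_n\bydef f(r_n)+h_n(r-r_n)$ --- namely $g_n(x)$ is the value at $r(x)$ of the tangent line to $f$ at $r_n(x)$, whence $g_n\le f(r)$ pointwise --- all the relevant integrals are finite and
\[
\bbE_P[h_n]-\bbE_Q[f^*(h_n)]=\bbE_Q[g_n]\le D_f(P||Q),
\]
the inequality coming from the first part applied to $h_n$. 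It remains to show $\liminf_n\bbE_Q[g_n]\ge D_f(P||Q)$, which rests on two facts. (A) \emph{Pointwise convergence:} $g_n\to f(r)$ $Q$-almost everywhere --- if $r(x)\in\mathrm{int}(\mathrm{dom}\,f)$ then $r_n(x)=r(x)$ for large $n$ and $g_n(x)=f(r(x))$; if $r(x)$ is a boundary point of $\mathrm{dom}\,f$ that lies in $\mathrm{dom}\,f$, then the tangent line to $f$ at $r_n(x)$, evaluated at $r(x)$, increases to $f(r(x))$ by lower semicontinuity of $f$; and if $r(x)\notin\mathrm{dom}\,f$ while $f$ has infinite slope toward the relevant end, $g_n(x)\to\infty=f(r(x))$. (B) \emph{Integrable minorant:} with $s_1\in\partial f(1)$ fixed, $g_n\ge\phi\bydef s_1(r-1)$, because $\phi$ is the tangent line to $f$ at $1$ evaluated at $r$, $r_n(x)$ always lies between $1$ and $r(x)$, and the tangent lines of a convex function are monotone in the base point on either side of the evaluation point; moreover $\bbE_Q[\phi]=s_1(\bbE_Q[r]-1)=0$, so $\phi\in L^1(Q)$. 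Fatou's lemma applied to $g_n-\phi\ge0$ then gives $\liminf_n\bbE_Q[g_n-\phi]\ge\bbE_Q[f(r)-\phi]=D_f(P||Q)$, valid whether $D_f(P||Q)$ is finite or $+\infty$; hence $\bbE_Q[g_n]\to D_f(P||Q)$ and the supremum is $\ge D_f(P||Q)$.

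The one configuration in which (A) fails is degenerate: $\mathrm{dom}\,f$ is cut off at an endpoint with \emph{finite} slope --- say $\beta\bydef\sup\mathrm{dom}\,f<\infty$ with $f'_-(\beta)<\infty$, which by closedness of $f$ forces $f(\beta)<\infty$ --- while $r$ escapes beyond that endpoint on a set of positive $Q$-measure, so $D_f(P||Q)=\infty$ but $g_n$ stays bounded there. In that case we abandon $g_n$ and use explicit indicators: choose $\epsilon>0$ with $E\bydef\{r\ge\beta+\epsilon\}$ of positive $Q$-measure and take $h_t\bydef t\,\mathbf 1_E+s_1\,\mathbf 1_{E^c}$; since $f^*(t)=\beta t-f(\beta)$ for all large $t$ and $f^*(s_1)=s_1$, the objective equals $t\,\bbE_Q[(r-\beta)\mathbf 1_E]+(\text{const})\ge t\epsilon Q(E)+(\text{const})\to\infty$ as $t\to\infty$, with the symmetric construction if $r$ escapes below $\inf\mathrm{dom}\,f$. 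I expect the bookkeeping of these edge cases --- verifying that the limiting argument still delivers $+\infty$ for every shape of $f$ (bounded or unbounded domain, finite or infinite boundary slopes, endpoints in or out of $\mathrm{dom}\,f$) --- to be the main obstacle; it is precisely the issue flagged in the footnote, that the ``supremum attained at $\partial f(\diff P/\diff Q)$'' heuristic of \cite{keziou2003dual, nguyen2010estimating} can break down, and where the care inspired by \cite{rockafellar1968integrals} is needed.
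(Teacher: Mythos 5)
Your proof is correct, but it takes a genuinely different route from the paper's. The paper proves \eqref{eq: dual-f-div} in essentially two lines: after writing $\bbE_P[h]=\int_\Omega h\,\frac{\diff P}{\diff Q}\diff Q$, the identity $\sup_{h\in B(\Omega,\Sigma)}\int_\Omega h\,\frac{\diff P}{\diff Q}\diff Q-\int_\Omega f^*(h)\diff Q=\int_\Omega f^{**}\bigl(\frac{\diff P}{\diff Q}\bigr)\diff Q$ is obtained by citing Rockafellar's interchange theorem for conjugate integral functionals on decomposable spaces (Theorem~\ref{thm: roc68}, applied with $g=f^*$, $L=B(\Omega,\Sigma)$, $L'=L^1(Q)$), after which $f^{**}=f$ by Fenchel--Moreau (Fact~\ref{fact: double-f}). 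All the analytic content --- truncating $\diff P/\diff Q$ to compact subintervals of $\mathrm{int}(\mathrm{dom}\,f)$, selecting bounded near-optimal subgradients, and passing to the limit --- is outsourced to the proof of that cited theorem. You carry this content out by hand: Fenchel--Young for the easy direction, the tangent-line functions $g_n$ for the hard direction, the integrable minorant $s_1(r-1)$ feeding Fatou, and a separate indicator construction for the case where $r$ escapes past an endpoint of $\mathrm{dom}\,f$ at which $f$ has finite value and finite slope (the one situation where $g_n$ fails to blow up, so the switch is genuinely needed). The details check out: $h_n=f'_+(r_n)$ is a bounded measurable subgradient selection, the Fenchel--Young equality holds at $r_n$, and the bound $g_n\ge s_1(r-1)$ follows from monotonicity of tangent-line values in the base point together with $r_n$ lying between $1$ and $r$. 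What each approach buys: the paper's proof is short and uniform but opaque, leaning entirely on the imported machinery of \cite{rockafellar1968integrals}; yours is self-contained and elementary, and it makes explicit exactly where the naive ``optimize at $\partial f(\diff P/\diff Q)$'' heuristic of \cite{keziou2003dual,nguyen2010estimating} breaks down, which is precisely the issue the paper's footnote raises --- at the cost of the edge-case bookkeeping you correctly identify as the main burden.
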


\section{Related Work}

As a novel method for statistical inference, generative adversarial networks~\cite{goodfellow2014generative} have sparked a great deal of follow-up work on both theoretical and empirical sides.

The work most relevant to us are~\cite{goodfellow2014generative, Nowozin2016f} and~\cite{liu2017approximation}. \cite{goodfellow2014generative} shows that when both generators and discriminators are unrestricted, the optimal GAN solution converges to the input data distribution. \cite{Nowozin2016f} introduces $f$-GANs -- given samples $\hatpdata$ from an unknown data distribution $\pdata$, the objective is to find a distribution $Q$ that minimizes $D_f\left(\hatpdata, Q\right)$, where $D_f$ is an $f$-divergence. They show that minimizing this objective is equivalent to a GAN where the discriminators are unrestricted, and the objective corresponds to the variational form of the relevant $f$-divergence. 

\cite{liu2017approximation} considers approximation properties of GANs when the discriminators are restricted, but the input distribution lies in the interior of the class of distributions that can be produced by the generators -- in short, there is no model mismatch. They show that in this case, the solution produced by linear $f$-GANs -- that is, $f$-GANs whose discriminators are linear over a pre-specified feature space $\phi$ -- have the property that: $\bbE_{x \sim \hatpdata}[\phi(x)] = \bbE_{x \sim Q}[\phi(x)]$. In other words, the optimal solution agrees with the generalized method of moments solution. Our work can be thought of as an extension of this work to the model mismatch case. \cite{nock2017f} provides an information-geometric characterization of $f$-GANs when the input and the generator belongs to a class of distributions called the deformed exponential family. 


On the theoretical side,~\cite{arora2017generalization, singh2018nonparametric, liang2017well, bai2018approximability, feizi2017understanding} consider finite sample issues in GANs under different objective functions in various parametric and non-parametric settings, and provide bounds on their sample requirement. \cite{biau2018some} provides asymptotic convergence bounds on GAN solutions when both generators and discriminators are unrestricted. \cite{bottou2018geometrical} provide an analysis of the geometry of different GAN objective functions, with a view towards explaining their relative performance.

Finally, there has also been much recent work on the theoretical analysis of the optimization challenges that arise in the inference process of GANs; some examples include \cite{heusel2017gans, nagarajan2017gradient, li2017towards, mescheder2017numerics, barnett2018convergence}.

\section{Conclusion}
In conclusion, we provide a theoretical characterization of the distribution induced by the optimal generator in generative adversarial learning. Unlike prior work~\cite{goodfellow2014generative,liu2017approximation}, our result applies when both the generator and the discriminator are restricted. When applied to linear $f$-GANs, our characterization shows that the optimal linear KL-GAN solution offers an interesting mix of maximum likelihood and the method of moments.

Our work assumes that a sufficient number of samples is always available and that the optimal solution is always attainable. We believe removing these assumptions is an important avenue for future work.

\paragraph{Acknowledgments.} We thank NSF under IIS 1617157 and ONR under N00014-16-1-261 for research support.

\printbibliography

\appendix
\section{Preliminaries for the Proofs}
For any $E\in\Sigma$, denote by $1_E: \Omega\to\bbR$ the indicator function that takes value $1$ over $E$ and $0$ everywhere else. We will sometimes use constants to represent constant functions. For any two real-valued functions $g$ and $g'$ defined over the same domain $D$, we write $g\leq g'$ if for any $x\in D$, $g(x) \leq g'(x)$. 
For any topological vector space $X$, we denote by $X^*$ the topological dual of $X$, which is the set of all continuous linear functions over $X$.

\begin{theorem}[dual of $B(\Omega, \Sigma)$~\cite{hildebrandt1934bounded}]\label{thm: dual} $B(\Omega, \Sigma)^*$ can be identified with $ba(\Omega, \Sigma)$ by defining for any $h\in B(\Omega, \Sigma)$ and any $\mu\in ba(\Omega, \Sigma)$
\begin{align*}
    \left<h, \mu\right> \bydef \int_{\Omega}h\diff\mu.
\end{align*}
\end{theorem}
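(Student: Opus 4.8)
The plan is to exhibit the correspondence $L \leftrightarrow \mu_L$ between $B(\Omega, \Sigma)^*$ and $ba(\Omega, \Sigma)$ explicitly and to check that it is realized by the pairing $\left<h, \mu\right> = \int_\Omega h \diff \mu$. Given $L \in B(\Omega, \Sigma)^*$, I would set $\mu_L(E) \bydef L(1_E)$ for $E \in \Sigma$. Finite additivity of $\mu_L$ follows at once from linearity of $L$ together with the identity $1_{E \cup F} = 1_E + 1_F$ for disjoint $E, F \in \Sigma$. Boundedness of the total variation is the one point needing a short argument: for any finite measurable partition $E_1, \dots, E_n$ of $\Omega$, choosing $\varepsilon_i = \operatorname{sgn} \mu_L(E_i) \in \{-1, 0, 1\}$ gives $\sum_i \abs{\mu_L(E_i)} = L\left(\sum_i \varepsilon_i 1_{E_i}\right) \le \norm{L}$, since $\norm{\sum_i \varepsilon_i 1_{E_i}}_\infty \le 1$. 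Hence $\mu_L \in ba(\Omega, \Sigma)$, with $\abs{\mu_L}(\Omega) \le \norm{L}$.

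Next I would prove $L(h) = \int_\Omega h \diff \mu_L$ for every $h \in B(\Omega, \Sigma)$. For a simple function $h = \sum_i c_i 1_{E_i}$ with disjoint $E_i$, this is just linearity of $L$: $L(h) = \sum_i c_i L(1_{E_i}) = \sum_i c_i \mu_L(E_i) = \int_\Omega h \diff \mu_L$. For a general $h$, I would write it as a uniform limit of simple functions (partition its bounded range into finitely many narrow intervals), invoke continuity of $L$, and note that $g \mapsto \int_\Omega g \diff \mu_L$ is also continuous in the uniform norm since $\abs{\int_\Omega g \diff \mu_L} \le \norm{g}_\infty \abs{\mu_L}(\Omega)$; passing to the limit yields the identity. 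The converse is then immediate: each $\mu \in ba(\Omega, \Sigma)$ defines a bounded linear functional $L_\mu(h) \bydef \int_\Omega h \diff \mu$, the assignments $L \mapsto \mu_L$ and $\mu \mapsto L_\mu$ are mutually inverse (since $\mu_{L_\mu}(E) = \int_\Omega 1_E \diff \mu = \mu(E)$, and $L_{\mu_L} = L$ by the previous step), and combining $\abs{\mu_L}(\Omega) \le \norm{L}$ with $\norm{L_\mu} \le \abs{\mu}(\Omega)$ shows the correspondence is an isometry.

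The main obstacle is foundational rather than subtle: one first has to set up the integral $\int_\Omega h \diff \mu$ of a bounded $\Sigma$-measurable function against a merely \emph{finitely} additive signed measure. Because convergence theorems such as dominated or monotone convergence are unavailable in the finitely additive setting, the integral must be built on simple functions — where it is well defined and bounded by $\abs{\mu}(\Omega)$ in the uniform norm — and then extended to all of $B(\Omega, \Sigma)$ by uniform continuity, using density of simple functions; finiteness of the total variation is exactly what licenses this extension. Once this (classical) integration theory is in place, the identification reduces to the routine computation above, so in practice the paper may simply cite \cite{hildebrandt1934bounded} for Theorem~\ref{thm: dual}; the sketch above records what that reference supplies and why the statement holds.
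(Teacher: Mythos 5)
Your proposal is correct and takes the only reasonable route to this classical fact: the paper itself gives no proof of Theorem~\ref{thm: dual}, deferring entirely to \cite{hildebrandt1934bounded}, and your sketch --- defining $\mu_L(E) = L(1_E)$, getting finite additivity from linearity, bounding the total variation by testing $L$ against $\sum_i \varepsilon_i 1_{E_i}$, constructing $\int_\Omega h \diff\mu$ on simple functions and extending by uniform density, and checking that $L \mapsto \mu_L$ and $\mu \mapsto L_\mu$ are mutually inverse isometries --- is precisely the argument the cited reference supplies. Nothing is missing.
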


\begin{definition}[general convex conjugacy \cite{rockafellar1968integrals}]\label{def: conjugacy}
Let $(E, E')$ be a pair of real vector spaces, $\left<x, x'\right>$ be a real bilinear function of $x\in E$ and $x'\in E'$, and $\calF: E\to (-\infty, \infty]$ be a proper convex function, then we can define on $E'$ the conjugate of $\calF$, denoted by $\calF^*$, as
\begin{align*}
    \calF^*(x^*) \bydef \sup_{x\in E} \left<x, x'\right> - \calF(x),
\end{align*}
and define on $X$ the conjugate of $\calF^*$, denoted by $\calF^{**}$, as 
\begin{align*}
    \calF^{**}(x) \bydef \sup_{x'\in E'} \left<x, x'\right> - \calF^*(x);
\end{align*}
if only $\calF$ is specified, then it is assumed that $E$ is the domain of $\calF$ and $E'$ is $E^*$, and the bilinear function is given by $\left<x, x^*\right>\bydef x^*(x)$ for $x\in E$ and $x^*\in E^*$.
\end{definition}

\begin{theorem}[Fenchel-Moreau,  \cite{zalinescu2002convex} Theorem 2.3.3]\label{thm: fm}
If $E$ is a Hausdorff locally convex space, and $\calF$ is a proper lower semi-continuous convex function on $E$, then $\calF = \calF^{**}$. 
\end{theorem}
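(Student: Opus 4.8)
The plan is to prove the two inequalities separately, the difficulty being concentrated entirely in one of them. Write $r = \frac{\diff P}{\diff Q}$, which exists and is finite $Q$-almost everywhere because $P \ll Q$. The inequality $D_f(P\|Q) \ge \sup_{h} \bbE_P[h] - \bbE_Q[f^*(h)]$ is the routine direction and follows from Fenchel--Young: since $f^*(s) = \sup_x xs - f(x)$, we have the pointwise bound $f(r(x)) \ge h(x) r(x) - f^*(h(x))$ for every $h \in B(\Omega,\Sigma)$, valid $Q$-a.e. Integrating against $Q$ and using the change-of-measure identity $\int_\Omega h\, r \diff Q = \int_\Omega h \diff P$ gives $D_f(P\|Q) \ge \bbE_P[h] - \bbE_Q[f^*(h)]$, and taking the supremum over $h$ finishes this direction.

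The substance is the reverse inequality, which is exactly where the difficulty flagged in the footnote lives. Equality in Fenchel--Young is attained at a subgradient of $f$ at $r(x)$; concretely I would take the measurable selection $h^*(x) = f'_+(r(x))$ (measurable since $f'_+$ is monotone and $r$ is measurable), so that $f(r(x)) = h^*(x) r(x) - f^*(h^*(x))$ pointwise. The obstacle is that $h^*$ need not belong to $B(\Omega,\Sigma)$: where $r$ is large, or where $f$ has infinite slope near an endpoint of its effective domain, $h^*$ can be unbounded or infinite. I therefore plan to approximate $h^*$ by its truncations $h_n = \max(-n, \min(n, h^*)) \in B(\Omega,\Sigma)$ and to show $\bbE_P[h_n] - \bbE_Q[f^*(h_n)] \to D_f(P\|Q)$, which would force the supremum to reach $D_f(P\|Q)$.

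The main obstacle is controlling this truncation error rigorously. Writing the deficit as $D_f(P\|Q) - \big(\bbE_P[h_n] - \bbE_Q[f^*(h_n)]\big) = \int_\Omega \big(f(r) - h_n r + f^*(h_n)\big)\diff Q$, the integrand is nonnegative by Fenchel--Young and vanishes on $\{|h^*| \le n\}$, where $h_n = h^*$ realizes equality; hence the deficit is the integral of a nonnegative integrand supported on $\{|h^*| > n\}$, and for each fixed $x$ this integrand is eventually zero once $n > |h^*(x)|$. The heart of the proof is upgrading this pointwise decay to convergence of the integral. I would first dispose of the case $D_f(P\|Q) = +\infty$ by exhibiting bounded $h$ driving the objective to $+\infty$; in the finite case I would split according to the sign of $h^*$ and apply monotone/dominated convergence to the positive and negative tails separately, using the integrability of $f(r)$ under $Q$ and the convex lower bound on $f$. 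Boundary effects — points where $r = 0$, or where $r$ lands at an endpoint of $\mathrm{dom}\, f$ so that $\partial f(r)$ degenerates into a half-line — must be treated as special cases both in the selection of $h^*$ and in this bookkeeping.

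As a cleaner conceptual viewpoint that supplies sufficient conditions for exactly this limiting argument, I note that the right-hand side is the convex conjugate $\Phi^*(P)$ of the integral functional $\Phi(h) = \bbE_Q[f^*(h)]$ on $B(\Omega,\Sigma)$, paired with $ba(\Omega,\Sigma)$ through Theorem~\ref{thm: dual}. Rockafellar's duality for integral functionals \cite{rockafellar1968integrals} identifies the conjugate of $\int_\Omega f^*(h)\diff Q$ with an integral functional built from $(f^*)^* = f^{**} = f$ (using that $f$ is proper lower semicontinuous convex) plus a recession term charged only to the part of the argument singular with respect to $Q$. Since $P$ is countably additive and $P \ll Q$, that singular part vanishes and the formula collapses to $\int_\Omega f(r)\diff Q = D_f(P\|Q)$. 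This is the most direct route to the theorem, and the truncation argument sketched above is essentially a self-contained verification of the relevant instance of it.
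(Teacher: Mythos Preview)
Your proposal does not address the stated theorem. The statement in question is the Fenchel--Moreau biconjugate theorem: for a proper lower semicontinuous convex function $\calF$ on a Hausdorff locally convex space $E$, one has $\calF = \calF^{**}$. The paper does not prove this result at all; it is quoted from \cite{zalinescu2002convex} and used as a black box. What you have written is instead a proof sketch for Theorem~\ref{thm: extension}, the variational representation of the $f$-divergence $D_f(P\|Q) = \sup_{h \in B(\Omega,\Sigma)} \bbE_P[h] - \bbE_Q[f^*(h)]$. Nothing in your argument speaks to general locally convex spaces, biconjugates, or separation of epigraphs, which is what a proof of Fenchel--Moreau would require.

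If your intent was actually Theorem~\ref{thm: extension}, then your final paragraph is essentially the paper's proof: the paper invokes Rockafellar's interchange theorem for integral functionals (stated here as Theorem~\ref{thm: roc68}) with $L = B(\Omega,\Sigma)$, $L' = L^1(Q)$, $g = f^*$, and $u' = \diff P/\diff Q$, obtaining $\sup_{h \in B(\Omega,\Sigma)} \int h\,\frac{\diff P}{\diff Q}\diff Q - \int f^*(h)\diff Q = \int f^{**}(\diff P/\diff Q)\diff Q = D_f(P\|Q)$ in one line. Your truncation-of-subgradients plan is a more hands-on attempt at the same identity, but as you yourself note it has real gaps: the dominated/monotone convergence split on the sets $\{h^* > n\}$ and $\{h^* < -n\}$ needs an integrable envelope that you have not produced, and the boundary cases where $\partial f(r(x))$ is empty or a half-line are left as ``special cases.'' The paper sidesteps all of this by citing Rockafellar's result wholesale, which is exactly the route you flag at the end as ``the most direct.''
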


\begin{fact}\label{fact: r-hausdorff}
$\bbR$ with the usual topology is a Hausdorff locally convex space.
\end{fact}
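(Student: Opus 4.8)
The plan is to verify the three structural requirements in turn: that $\bbR$ is a topological vector space over itself, that it is Hausdorff, and that it is locally convex, each with respect to the usual topology generated by the open intervals. First I would recall that $\bbR$ is a vector space over the field $\bbR$ under ordinary addition and scalar multiplication, and that the usual topology is precisely the topology induced by the absolute-value norm $\norm{x} = \abs{x}$. To confirm that this makes $\bbR$ a topological vector space, I would check that addition $(x, y) \mapsto x + y$ and scalar multiplication $(\alpha, x) \mapsto \alpha x$ are jointly continuous; both follow from the elementary estimates $\abs{(x+y) - (x_0 + y_0)} \leq \abs{x - x_0} + \abs{y - y_0}$ and $\abs{\alpha x - \alpha_0 x_0} \leq \abs{\alpha}\abs{x - x_0} + \abs{x_0}\abs{\alpha - \alpha_0}$, which are standard.

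Next, for the Hausdorff property, given distinct points $x \neq y$ I would set $\epsilon \bydef \abs{x - y}/2 > 0$ and take the open intervals $(x - \epsilon, x + \epsilon)$ and $(y - \epsilon, y + \epsilon)$; by the triangle inequality these are disjoint open neighbourhoods separating $x$ and $y$, which establishes that $\bbR$ is Hausdorff.

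Finally, for local convexity, I would exhibit a neighbourhood base at the origin consisting of convex sets: the family $\{(-\epsilon, \epsilon) : \epsilon > 0\}$ is a base of open neighbourhoods of $0$, and each interval $(-\epsilon, \epsilon)$ is convex since every interval in $\bbR$ is convex. Equivalently, one may simply invoke the general fact that every normed space is locally convex, applied to the norm $\abs{\cdot}$, since the balls of a norm are convex and form a neighbourhood base at $0$.

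There is no substantive obstacle here, as the statement is elementary and is recorded only because the ambient space $\bbR$ is the target of the bilinear pairing used in the convex-conjugacy and Fenchel--Moreau machinery (Definition~\ref{def: conjugacy}, Theorem~\ref{thm: fm}). The sole point that warrants a line of care is confirming that the absolute-value norm induces exactly the usual topology; this is immediate, since the open $\norm{\cdot}$-balls centred at a point are precisely the open intervals centred at that point, so the two topologies share the same open sets.
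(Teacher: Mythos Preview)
Your proof is correct and follows essentially the same approach as the paper, which simply observes that the usual topology on $\bbR$ is induced by the absolute-value norm and that every normed space is a Hausdorff locally convex space. You expand this one-line argument by verifying the topological vector space structure, the Hausdorff property, and local convexity explicitly, but you also note the normed-space shortcut, so the underlying idea coincides with the paper's.
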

\begin{proof}
The usual topology on $\bbR$ can be induced by the usual norm on $\bbR$ and a normed space is a Hausdorff locally convex space.
\end{proof}
\begin{fact}\label{fact: b-hausdorff}
$B(\Omega, \Sigma)$ is a Hausdorff locally convex space.
\end{fact}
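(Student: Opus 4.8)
The plan is to recognize that $B(\Omega,\Sigma)$ is a normed vector space and then invoke exactly the same principle used in the proof of Fact~\ref{fact: r-hausdorff}: every normed space is a Hausdorff locally convex topological vector space. So the proof reduces to checking that the uniform norm is a genuine norm on $B(\Omega,\Sigma)$, which is essentially bookkeeping.

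Concretely, I would proceed in three short steps. First, verify the linear structure: the pointwise sum of two bounded measurable functions is again bounded and measurable, and scalar multiples of a bounded measurable function are bounded and measurable, so $B(\Omega,\Sigma)$ is a real vector space. Second, verify that $\norm{h}_\infty \bydef \sup_{\omega\in\Omega}\abs{h(\omega)}$ is well defined and finite for every $h\in B(\Omega,\Sigma)$ — this is precisely where boundedness, which is built into the definition of $B(\Omega,\Sigma)$, is used — and that it satisfies the norm axioms: $\norm{h}_\infty = 0 \iff h\equiv 0$, $\norm{\alpha h}_\infty = \abs{\alpha}\norm{h}_\infty$, and $\norm{h+h'}_\infty \le \norm{h}_\infty + \norm{h'}_\infty$, each of which follows from the corresponding property of $\abs{\cdot}$ on $\bbR$ together with elementary facts about suprema. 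This norm is exactly the one whose induced topology the Preliminaries attach to $B(\Omega,\Sigma)$. Third, conclude: in any normed space the norm induces a translation-invariant metric whose open balls are convex (giving local convexity) and which separates distinct points by disjoint balls (giving the Hausdorff property), so $B(\Omega,\Sigma)$ is a Hausdorff locally convex space.

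I do not expect a genuine obstacle here; the statement is a standard instance of ``normed space $\Rightarrow$ Hausdorff locally convex TVS,'' and one could even cite this rather than reprove it, mirroring how Fact~\ref{fact: r-hausdorff} is handled. The only point deserving a word of care is that the defining supremum is finite purely because boundedness is part of the definition of $B(\Omega,\Sigma)$ — so $\norm{\cdot}_\infty$ here should not be conflated with an essential supremum, which would only be a seminorm — but beyond that remark the argument is entirely routine.
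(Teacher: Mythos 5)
Your proof is correct and takes essentially the same route as the paper, which simply notes that the topology on $B(\Omega,\Sigma)$ is induced by the uniform norm and that every normed space is a Hausdorff locally convex space. The additional verification of the norm axioms is fine but not needed beyond the one-line citation.
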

\begin{proof}
The topology on $B(\Omega, \Sigma)$ is induced from the uniform norm and a normed space is a Hausdorff locally convex space.
\end{proof}
\begin{fact}\label{fact: f}
$f$ is a proper lower semi-continuous convex function.
\end{fact}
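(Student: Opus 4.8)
The plan is simply to unwind the hypotheses imposed on $f$ in the definition of the $f$-divergence and match them against the definition of a proper lower semi-continuous convex function. Recall that a function $\calF\colon E\to(-\infty,\infty]$ on a real vector space is called \emph{proper} if it is not identically $+\infty$ and never takes the value $-\infty$, and that convexity and lower semi-continuity are meant in their usual topological senses. Here we regard $f$ as a function on $\bbR$ equipped with its usual topology, which by Fact~\ref{fact: r-hausdorff} is a Hausdorff locally convex space, so that both lower semi-continuity and the later invocation of Fenchel--Moreau (Theorem~\ref{thm: fm}) are meaningful.

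First I would observe that convexity and lower semi-continuity of $f$ are assumed verbatim in the definition of the $f$-divergence, so there is nothing to verify on those two counts. Next, since $f$ is stipulated to take values in $(-\infty,\infty]$, it never attains $-\infty$. Finally, because $f$ is assumed to be finite in some neighbourhood of $1$ — in particular $f(1)=0<\infty$ — the function is not identically $+\infty$. Putting these three observations together yields that $f$ is a proper lower semi-continuous convex function.

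There is no real obstacle here: the statement is an immediate bookkeeping consequence of the standing assumptions on $f$, and its only purpose is to record, in a form convenient for the appendix, the hypotheses needed to apply Fenchel--Moreau to $f$ and its conjugate $f^*$ later in the proofs.
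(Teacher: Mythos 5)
Your proposal is correct and follows essentially the same route as the paper: convexity and lower semi-continuity are hypotheses in the definition of the $f$-divergence, and properness follows from $f(1)=0$ being finite together with the codomain $(-\infty,\infty]$ excluding the value $-\infty$. Your version is marginally more explicit about both halves of the definition of properness, but there is no substantive difference.
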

\begin{proof}
Recall that by assumption $f$ is a lower semi-continuous convex function. To see $f$ is also proper, note that by assumption $f(1) = 0$, therefore $f$ is a lower semi-continuous convex function that takes finite value at some point, hence also a proper function.
\end{proof}
\begin{fact}\label{fact: fstar}
$f^*$ is a proper lower semi-continuous convex function.
\end{fact}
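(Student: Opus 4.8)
The plan is to check convexity, lower semi-continuity, and properness of $f^*(s)=\sup_{x\in\bbR}\bigl(xs-f(x)\bigr)$ in turn, leaning on Fact~\ref{fact: f} (that $f$ is proper lsc convex) and, for the harder half of properness, on the Fenchel--Moreau theorem (Theorem~\ref{thm: fm}) together with Fact~\ref{fact: r-hausdorff}.

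First I would observe that convexity and lower semi-continuity are immediate from the definition: for each fixed $x$ with $f(x)<\infty$ the map $s\mapsto xs-f(x)$ is affine, hence convex and continuous, hence lsc, and a pointwise supremum of convex lsc functions is again convex and lsc. Points $x$ with $f(x)=\infty$ contribute $-\infty$ to the supremum and can be discarded; since $f$ is proper (Fact~\ref{fact: f}), at least one $x$---for instance $x=1$, where $f(1)=0$---has $f(x)$ finite, so the supremum is genuinely taken over a nonempty set.

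Next I would rule out the value $-\infty$: evaluating the supremum at $x=1$ and using $f(1)=0$ gives $f^*(s)\ge 1\cdot s-f(1)=s>-\infty$ for every $s\in\bbR$, so $f^*>-\infty$ everywhere.

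The one step needing care is showing $f^*\not\equiv+\infty$. I would argue by contradiction: if $f^*\equiv+\infty$, then $f^{**}(x)=\sup_{s\in\bbR}\bigl(xs-f^*(s)\bigr)=-\infty$ for all $x$; but $\bbR$ is a Hausdorff locally convex space (Fact~\ref{fact: r-hausdorff}) and $f$ is proper lsc convex (Fact~\ref{fact: f}), so Theorem~\ref{thm: fm} yields $f=f^{**}$, contradicting $f(1)=0$. Hence $\operatorname{dom} f^*\neq\emptyset$, and combined with the previous paragraph $f^*$ is proper. A more hands-on alternative: because $f$ is finite in a neighbourhood of $1$, the point $1$ lies in the interior of $\operatorname{dom} f$, so $\partial f(1)\neq\emptyset$, and any $s_0\in\partial f(1)$ satisfies $f^*(s_0)=s_0-f(1)=s_0<\infty$. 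Everything else is routine convex-analysis bookkeeping; the only slightly substantive point is this appeal to Fenchel--Moreau (or to the existence of a subgradient at an interior point of $\operatorname{dom} f$) to guarantee that the conjugate is not identically $+\infty$.
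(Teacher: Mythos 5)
Your proof is correct and takes essentially the same route as the paper, which simply cites a standard convex-analysis result for the lower semi-continuity and convexity of $f^*$ and asserts that $f^*$ is proper because $f$ is. You merely unpack those citations: the supremum-of-affine-functions argument for convexity and lower semi-continuity, the affine minorant $f^*(s)\ge s$ coming from $f(1)=0$, and the Fenchel--Moreau (or interior-subgradient) argument showing $f^*\not\equiv+\infty$ are exactly the standard justifications, and in fact supply the detail that the paper's one-line properness claim leaves implicit.
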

\begin{proof}
Note that $\bbR^* = \bbR$, and the weak* topology on $\bbR^*$ is the same as the usual topology. Therefore $f^*$ is a lower semi-continuous convex function (\cite{zalinescu2002convex} Theorem 2.3.1). $f^*$ is proper because $f$ is proper.
\end{proof}
\begin{fact}\label{fact: double-f}
$f^{**} = f$.
\end{fact}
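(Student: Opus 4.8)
The plan is to obtain $f^{**}=f$ as an immediate instance of the Fenchel--Moreau theorem (Theorem~\ref{thm: fm}) applied to the one-dimensional space $\bbR$. By the convention in Definition~\ref{def: conjugacy}, since only $f$ is specified, the conjugate $f^*$ and biconjugate $f^{**}$ are formed with respect to the canonical pairing between $\bbR$ and its topological dual $\bbR^*$, with $\langle x, x^*\rangle = x^*(x)$. First I would observe that $\bbR^*$ is canonically identified with $\bbR$ (every continuous linear functional on $\bbR$ is multiplication by a scalar), under which this pairing is just the product $(x,s)\mapsto xs$; hence $f^*(s) = \sup_{x\in\bbR} xs - f(x)$ coincides with the elementary convex conjugate introduced in the Preliminaries, and likewise $f^{**}(x) = \sup_{s\in\bbR} xs - f^*(s)$ is the ordinary biconjugate.

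Next I would verify the hypotheses of Theorem~\ref{thm: fm}: $\bbR$ with its usual topology is a Hausdorff locally convex space (Fact~\ref{fact: r-hausdorff}), and $f$ is a proper lower semi-continuous convex function (Fact~\ref{fact: f}); in particular $f$ never takes the value $-\infty$ since its codomain is $(-\infty,\infty]$, and it is finite at $1$, so it is proper. Applying Fenchel--Moreau then gives $f = f^{**}$ directly, and as a byproduct this is consistent with $f^*$ being proper lsc convex (Fact~\ref{fact: fstar}).

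The only point requiring care — and the closest thing to an obstacle — is to confirm that the weak* topology on $\bbR^*$ implicitly used by Theorem~\ref{thm: fm} in forming the conjugate agrees, under the identification $\bbR^*\cong\bbR$, with the usual topology on $\bbR$; on a finite-dimensional space the weak*, weak, and norm topologies all coincide. This is precisely the observation already exploited in the proof of Fact~\ref{fact: fstar}, so once it is recorded there is nothing further to do: the statement is simply the specialization of the abstract biconjugation result to $E=\bbR$.
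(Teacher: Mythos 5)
Your proposal is correct and follows exactly the paper's own route: invoke the Fenchel--Moreau theorem (Theorem~\ref{thm: fm}) using Fact~\ref{fact: r-hausdorff} and Fact~\ref{fact: f}, with the identification $\bbR^*\cong\bbR$ handled as in the proof of Fact~\ref{fact: fstar}. The extra care you take about the weak* versus usual topology is a welcome but inessential elaboration of the same argument.
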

\begin{proof}
According to Fact~\ref{fact: r-hausdorff} and Fact~\ref{fact: f}, $f$ is a proper lower semi-continuous convex function on a Hausdorff locally convex space, therefore by Theorem~\ref{thm: fm}, we have $f^{**} = f$.
\end{proof}
\begin{fact}\label{fact: f-nondec}
$f^*$ is non-decreasing.
\end{fact}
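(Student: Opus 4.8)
\textbf{Proof proposal for Fact~\ref{fact: f-nondec}.}
The plan is to read off monotonicity directly from the definition of the conjugate, exploiting the fact that $f$ is $+\infty$ on the negative half-line so that the supremum defining $f^*$ effectively ranges only over $x \ge 0$. Concretely, recall $f^*(s) = \sup_{x\in\bbR}\left(xs - f(x)\right)$. Since $f(x) = \infty$ for every $x < 0$, each such $x$ contributes $xs - f(x) = -\infty$ to the supremum, so for every $s\in\bbR$ we may rewrite $f^*(s) = \sup_{x\ge 0}\left(xs - f(x)\right)$.

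Next I would fix $s_1 \le s_2$ in $\bbR$ and compare the two suprema term by term. For any fixed $x \ge 0$, multiplying the inequality $s_1 \le s_2$ by the nonnegative number $x$ gives $x s_1 \le x s_2$, hence $x s_1 - f(x) \le x s_2 - f(x)$ (this holds even when $f(x) = \infty$, both sides being $-\infty$). Taking the supremum over $x \ge 0$ on both sides, and using the rewriting from the first step, yields $f^*(s_1) \le f^*(s_2)$, which is exactly the claim. A small bookkeeping remark worth including: $f^*$ is proper (Fact~\ref{fact: fstar}), and in any case $f^*(s) \ge 1\cdot s - f(1) = s > -\infty$ using $f(1) = 0$, so the suprema are genuine (not the degenerate $\sup\emptyset = -\infty$) and the term-by-term comparison is legitimate.

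There is essentially no hard step here; the only thing to be careful about is the reduction to $x \ge 0$, i.e.\ making sure the negative-$x$ part of the domain — where the sign of $x$ would otherwise reverse the monotonicity — is harmless because $f$ is infinite there. Once that reduction is in place, the argument is the one-line observation that $x \mapsto xs - f(x)$ is pointwise non-decreasing in $s$ on $x \ge 0$, and the supremum of a family of non-decreasing functions is non-decreasing.
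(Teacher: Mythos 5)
Your proof is correct, but it takes a genuinely different route from the paper's. The paper argues through the biconjugate: by Fact~\ref{fact: double-f} (Fenchel--Moreau), $f^{**}=f$, hence $f^{**}(x)=\infty$ for all $x<0$; since $f^{**}$ is the conjugate of the proper lower semi-continuous convex function $f^*$, a point where $f^*$ is finite and admits a negative subgradient $x_0$ would make $f^{**}(x_0)$ finite, so $f^*$ can have no negative subgradient and is therefore non-decreasing. You instead read monotonicity directly off the definition of $f^*$: because $f\equiv\infty$ on $(-\infty,0)$, the supremum defining $f^*(s)$ effectively ranges over $x\ge 0$, where each map $s\mapsto xs-f(x)$ is pointwise non-decreasing, and a supremum of non-decreasing functions is non-decreasing. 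Your argument is more elementary --- it needs neither Fenchel--Moreau nor any semicontinuity of $f$, only the hypotheses $f(x)=\infty$ for $x<0$ and $f(1)=0$ --- and your remark that $f^*(s)\ge 1\cdot s-f(1)=s$ correctly disposes of the degenerate empty-supremum worry. What the paper's route buys is that it reuses facts it has already established (Facts~\ref{fact: fstar} and~\ref{fact: double-f}) rather than introducing a separate computation; as a standalone proof, yours is cleaner and complete.
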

\begin{proof}
This is because by assumption $f(x) = \infty$ for any $x < 0$, and then by Fact~\ref{fact: double-f}, $f^{**}(x) = \infty$ for any $x < 0$. This means $f^*$ is non-decreasing. 
\end{proof}

\begin{fact}\label{fact: f-sup}
$\sup_{t\in\bbR}\,t - f^*(t) = 0$.
\end{fact}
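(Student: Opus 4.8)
The plan is to recognize the left-hand side as an evaluation of the biconjugate $f^{**}$ at the point $x = 1$. By Definition~\ref{def: conjugacy}, specialized to $E = E' = \bbR$ with the bilinear form being ordinary multiplication, we have
\[
    f^{**}(x) = \sup_{t\in\bbR}\, xt - f^*(t),
\]
so that the quantity $\sup_{t\in\bbR}\, t - f^*(t)$ is exactly $f^{**}(1)$.

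Next I would invoke Fact~\ref{fact: double-f}, which states $f^{**} = f$ (itself a consequence of the Fenchel--Moreau theorem applied to the proper lower semi-continuous convex function $f$ on the Hausdorff locally convex space $\bbR$). This gives $f^{**}(1) = f(1)$, and by the standing assumption on $f$ in the definition of $f$-divergence we have $f(1) = 0$. Chaining these equalities yields $\sup_{t\in\bbR}\, t - f^*(t) = 0$, which is the claim.

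There is essentially no obstacle here: the only thing to be careful about is making sure the conventions match — that the bilinear pairing used to define $f^*$ and $f^{**}$ on $\bbR$ is indeed ordinary multiplication (so that plugging $x = 1$ into the biconjugate produces the coefficient-$1$ supremum we want), and that the hypotheses needed for $f^{**} = f$ (properness, lower semi-continuity, convexity of $f$) are the ones already recorded in Fact~\ref{fact: f} and Fact~\ref{fact: r-hausdorff}. All of these are in place, so the proof is a two-line citation.
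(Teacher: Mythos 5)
Your proposal is correct and is essentially identical to the paper's own argument: the paper also identifies $\sup_{t\in\bbR}\, t\cdot 1 - f^*(t)$ as $f^{**}(1)$, applies Fact~\ref{fact: double-f} to get $f^{**}(1) = f(1)$, and concludes from the assumption $f(1) = 0$. No gaps; nothing further to add.
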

\begin{proof}
This is because by assumption $f(1) = 0$, and then by Fact~\ref{fact: double-f}, $f^{**}(1) = 0$. Therefore by the definition of $f^{**}$, we have 
$
    \sup_{t\in\bbR}\,t\cdot 1 - f^*(t) = 0.
$
\end{proof}

We will need the following definition and result from \cite{rockafellar1968integrals}, which we note to be {\em  simplied} because in our case $Q$ is a probability measure (instead of a $\sigma$-finite measure in their case) and we only consider real-valued functions (instead of vector-valued function in their case).
\begin{definition}[decomposable , \cite{rockafellar1968integrals} simplified]\label{def: decomp}
 We say a set of real-valued measurable functions over  $(\Omega, \Sigma)$ is decomposable if 
\begin{itemize}
    \item[(a)] $L\supseteq B(\Omega, \Sigma)$;
    \item[(b)] for any $u\in L$ and $E\in\Sigma$,  $u\cdot 1_E \in L$.
\end{itemize}

\begin{theorem}[\cite{rockafellar1968integrals}, corollary of Theorem 2, simplified]\label{thm: roc68}
Let $Q\in \calP_c(\Omega, \Sigma)$. Suppose $L$ and $L'$ are decomposable and for any $u\in L $ and $u'\in L'$ the function $u\cdot u'$ is integrable w.r.t. $Q$, $g: \bbR\to (-\infty, \infty]$ is a lower semi-continuous proper convex function, then for any $u'\in L'$
\begin{align*}
     \int_{\Omega} g^*(u')\diff Q = \sup_{u\in L} \int_{\Omega}u\cdot u'\diff Q - \int_{\Omega}g(u)\diff Q
\end{align*}
\end{theorem}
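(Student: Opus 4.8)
The plan is to prove the two inequalities in the asserted identity separately. First observe that all integrals below are unambiguous: since $L'$ contains the constant function $1$, the product-integrability hypothesis forces every $u\in L$ (and symmetrically every $u'\in L'$) to be $Q$-integrable, and since $g$ and $g^*$ are proper lower semi-continuous convex they are each minorized by an affine function, so $\int_\Omega g(u)\diff Q$ and $\int_\Omega g^*(u')\diff Q$ are well-defined in $(-\infty,\infty]$ and no $\infty-\infty$ cancellation occurs on the right-hand side. The inequality ``$\geq$'' is then the routine Fenchel--Young step: for every $u\in L$ and every $\omega$ we have $g^*(u'(\omega))\geq u(\omega)u'(\omega)-g(u(\omega))$; integrating against $Q$, rearranging, and taking the supremum over $u\in L$ gives $\int_\Omega g^*(u')\diff Q\geq\sup_{u\in L}\bigl(\int_\Omega u u'\diff Q-\int_\Omega g(u)\diff Q\bigr)$.

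The inequality ``$\leq$'' is the substance, and it is exactly the conjugacy theorem for integral functionals. View $(\omega,x)\mapsto g(x)$ as a convex normal integrand on $\Omega\times\bbR$ --- normality is automatic since there is no $\omega$-dependence and $g$ is lower semi-continuous. The integral functional $I_g(u)\bydef\int_\Omega g(u)\diff Q$ is proper on $L$: it is $>-\infty$ everywhere by the affine minorant, and choosing $x_0$ in the nonempty domain of $g$ the constant function $x_0\in B(\Omega,\Sigma)\subseteq L$ gives $I_g(x_0)=g(x_0)<\infty$ (using $Q(\Omega)=1$); likewise $I_{g^*}$ is proper. With $L$ and $L'$ decomposable and in duality through $\langle u,u'\rangle=\int_\Omega u u'\diff Q$, Theorem~2 of \cite{rockafellar1968integrals} --- in the simplified probability-measure, real-valued form --- identifies the conjugate $I_g^*$ with $u'\mapsto\int_\Omega g^*(u')\diff Q$, i.e.\ $\int_\Omega g^*(u')\diff Q=\sup_{u\in L}\bigl(\langle u,u'\rangle-I_g(u)\bigr)$, which is precisely the claim.

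If one prefers a self-contained argument for ``$\leq$'', the core is a measurable selection: for each positive integer $n$ choose a measurable $x_n(\omega)$ with $x_n(\omega)u'(\omega)-g(x_n(\omega))\geq\min\bigl(n,g^*(u'(\omega))\bigr)-1/n$, replace it by its truncation to a bounded measurable function, and pass to the limit in $n$ and in the truncation level, using the product-integrability hypothesis together with monotone convergence to push the value up to $\int_\Omega g^*(u')\diff Q$, with the case $\int_\Omega g^*(u')\diff Q=+\infty$ handled by divergence as $n\to\infty$. The main obstacle is precisely this step: obtaining the pointwise near-maximizers \emph{measurably} and then correcting them into an admissible element of $L$ without destroying integrability of $u u'$ or letting $\int_\Omega g(u)\diff Q$ blow up --- which is exactly what the decomposability hypothesis (Definition~\ref{def: decomp}) and the measurable-selection machinery underlying Rockafellar's Theorem~2 are there to handle. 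In the write-up I would therefore rely on that citation and restrict myself to checking that our hypotheses match its assumptions after the stated simplification.
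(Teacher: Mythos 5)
Your proposal is correct and takes essentially the same approach as the paper: the paper states this result without proof, attributing it to Theorem 2 of \cite{rockafellar1968integrals}, and you likewise reduce the substantive inequality ``$\leq$'' to that citation after verifying that the hypotheses match, with the Fenchel--Young direction and the integrability/properness checks being routine additions. Your closing sketch correctly identifies the measurable-selection step as the content hidden in the cited theorem, and deferring to the reference there is exactly what the paper does.
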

\end{definition}

\section{Proof of Theorem \ref{thm: extension}}
Note that by the Radon-Nikodym theoremm (\cite{folland1999real} Theorem 3.8), for each bounded and  countably additive signed measure on $(\Omega, \Sigma)$ that is absolutely continuous w.r.t. $Q$, there is an element in $L^1(Q)$, denoted by $\frac{\diff P}{\diff Q}$ (the Radon-Nikodym derivative), such that 
\begin{align}
    \diff P = \frac{\diff P}{\diff Q}\diff Q.\label{eq: radon}
\end{align}
Therefore, 
\begin{align*}
\sup_{h\in B(\Omega, \Sigma)} \bbE_{P}[h] - \bbE_{ Q}[f^*(h)]
    &\stackrel{(a)}{=} \sup_{h\in B(\Omega, \Sigma)}\int_{\Omega} h\cdot \frac{\diff P}{\diff Q}\diff Q - \int_{\Omega}f^*(h)\diff Q\\
    &\stackrel{(b)}{=} \int_{\Omega} f^{**}\left(\frac{\diff P}{\diff Q}\right)\diff Q.
\end{align*}
Here (a) is from \eqref{eq: radon}. To see why (b) holds, note that $B(\Omega, \Sigma)$ and  $L^1(Q))$ are decomposable spaces (as defined in Definition~\ref{def: decomp}) such that for any $u\in B(\Omega, \Sigma)$ and $u'\in L^1(Q)$, the function $u\cdot u'$ is integrable w.r.t. $Q$, $f^*$ is lower semi-continuous proper convex function by Fact~\ref{fact: fstar}, and $f^{**} = f$ by Fact~\ref{fact: double-f}; therefore we can apply Theorem~\ref{thm: roc68} and get the equality.
\section{Proof of Theorem \ref{thm: main}}
Define the functional $\calR : B(\Omega, \Sigma)\to[-\infty, \infty]$ to be 
\begin{align}
             \calR (h) \bydef \inf_{b\in\bbR}\, \bbE_Q\left[f^*\left(h + b\right)\right] - b.\label{eq: R-definition}
\end{align}
We first show some properties related to $\calR$.

\begin{lemma}\label{lem: restrict-int}
If $E\in \Sigma$ and $Q(E) = 0$, then for any $h\in B(\Omega, \Sigma)$, $\calR(h\cdot 1_{\Omega \setminus E}) = \calR(h)$.
\end{lemma}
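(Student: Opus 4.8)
The plan is to show that modifying $h$ on the $Q$-null set $E$ does not change the value of $\calR(h)$. The key observation is that $\calR$ is defined purely through an integral against $Q$: $\calR(h) = \inf_{b\in\bbR} \bbE_Q[f^*(h+b)] - b$. Since $E$ has $Q$-measure zero, the functions $h$ and $h\cdot 1_{\Omega\setminus E}$ agree $Q$-almost everywhere.

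First I would write $h' = h\cdot 1_{\Omega\setminus E}$ and note that $h$ and $h'$ differ only on $E$, where $Q(E) = 0$. Then for any fixed $b\in\bbR$, the integrands $f^*(h+b)$ and $f^*(h'+b)$ agree on $\Omega\setminus E$, so they are equal $Q$-almost everywhere. Since integration against $Q$ is insensitive to changes on a $Q$-null set, we get $\bbE_Q[f^*(h+b)] = \bbE_Q[f^*(h'+b)]$ for every $b$. Here I should be slightly careful that $h, h'\in B(\Omega,\Sigma)$ are bounded measurable functions, so $f^*$ composed with them is measurable and bounded below (by Fact~\ref{fact: f-sup}, $f^*(t)\geq t$, though more simply $f^*$ is bounded on bounded sets since it is lower semi-continuous and convex, hence continuous on the interior of its domain — but we only need that the integral is well-defined in $(-\infty,\infty]$, which holds since $f^*$ is bounded below on any bounded interval). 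Taking the infimum over $b\in\bbR$ on both sides then yields $\calR(h) = \calR(h')$.

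The main (and essentially only) obstacle is a bookkeeping one: ensuring the integrals $\bbE_Q[f^*(h+b)]$ are well-defined so that the "integrals agree on a.e.-equal integrands" step is legitimate. Since $h$ is bounded, $h+b$ ranges over a bounded interval, and $f^*$ — being a proper lower semi-continuous convex function (Fact~\ref{fact: fstar}) — is bounded below on that interval; hence $\bbE_Q[f^*(h+b)]\in(-\infty,\infty]$ is well-defined, and the equality of integrals of $Q$-a.e.-equal functions holds in this extended sense. There is no real difficulty here; the lemma is a routine "null sets don't matter" fact, included presumably as a stepping stone for later arguments about restricting measures and functions to supports.

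\begin{proof}
Write $h' \bydef h\cdot 1_{\Omega\setminus E}$. Since $Q(E) = 0$, the functions $h$ and $h'$ are equal $Q$-almost everywhere, and hence for any fixed $b\in\bbR$, the bounded measurable functions $f^*(h+b)$ and $f^*(h'+b)$ are equal $Q$-almost everywhere. Because $f^*$ is a proper lower semi-continuous convex function (Fact~\ref{fact: fstar}), it is bounded below on any bounded interval, so both $\bbE_Q[f^*(h+b)]$ and $\bbE_Q[f^*(h'+b)]$ are well-defined elements of $(-\infty,\infty]$, and they are equal since the integrands agree $Q$-a.e. Therefore $\bbE_Q[f^*(h+b)] - b = \bbE_Q[f^*(h'+b)] - b$ for every $b\in\bbR$, and taking the infimum over $b$ gives $\calR(h) = \calR(h')$.
\end{proof}
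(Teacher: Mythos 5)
Your proof is correct and is essentially the same argument as the paper's: the paper likewise restricts the integral to $\Omega\setminus E$ using $Q(E)=0$, notes that $h\cdot 1_{\Omega\setminus E}$ and $h$ agree there, and takes the infimum over $b$; your extra remarks on well-definedness of $\bbE_Q[f^*(h+b)]$ are a harmless (and valid) addition, though the phrase ``bounded measurable functions $f^*(h+b)$'' is slightly off since these may take the value $\infty$.
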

\begin{proof} Observe that
\begin{align*}
    \calR(h\cdot 1_{\Omega \setminus E}) &= \inf_{b\in\bbR}\, \bbE_{Q}\left[f^*\left(h\cdot 1_{\Omega\setminus E} + b\right)\right] - b\\
    & \stackrel{(a)}{=} \inf_{b\in\bbR}\, \int_{\Omega\setminus E} f^*\left(h\cdot 1_{\Omega\setminus E} + b\right)\diff Q - b\\
    & = \inf_{b\in\bbR}\, \int_{\Omega\setminus E} f^*\left(h + b\right)\diff Q - b\\
    & \stackrel{(b)}{=} \inf_{b\in\bbR}\, \bbE_{Q}\left[f^*\left(h + b\right)\right] - b\\
    &=\calR(h),
\end{align*}
where (a) and (b) are because $Q(E) = 0$.
\end{proof}

\begin{lemma}\label{lem: lower}
The function $h\mapsto \bbE_Q[f^*(h)]$ defined on $B(\Omega, \Sigma)$ is lower semi-continuous.
\end{lemma}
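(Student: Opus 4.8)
The plan is to reduce the statement to sequential lower semi-continuity and then invoke Fatou's lemma. Since $B(\Omega, \Sigma)$ carries the uniform-norm topology it is metrizable, so it suffices to show that whenever $h_n \to h$ uniformly one has $\liminf_{n} \bbE_Q[f^*(h_n)] \geq \bbE_Q[f^*(h)]$. Uniform convergence gives pointwise convergence $h_n(x) \to h(x)$ for every $x \in \Omega$, and by Fact~\ref{fact: fstar} the function $f^*$ is lower semi-continuous on $\bbR$, hence $\liminf_n f^*(h_n(x)) \geq f^*(h(x))$ for every $x$. (Note also that $f^*(h) $ is bounded below by a constant for any $h\in B(\Omega,\Sigma)$, since $f^*$, being proper and lower semi-continuous, is bounded below on the compact range of $h$; so all the integrals below are well defined in $(-\infty,\infty]$.)

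The only point to check before applying Fatou is a $Q$-integrable lower bound on $f^*(h_n)$ valid for all large $n$. Set $M \bydef \norm{h}_\infty + 1$ and choose $N$ with $\norm{h_n - h}_\infty \leq 1$ for $n \geq N$; then $h_n(x) \geq h(x) - 1 \geq -M$ for all $x$ and all $n \geq N$, and since $f^*$ is non-decreasing (Fact~\ref{fact: f-nondec}) we get $f^*(h_n(x)) \geq f^*(-M)$, a constant lying in $(-\infty, \infty]$ because $f^*$ is proper and so never equals $-\infty$. If $f^*(-M) = \infty$, then $f^*(h_n) \equiv \infty$ for $n \geq N$ and the desired inequality is trivial; otherwise $f^*(-M)$ is a finite, hence $Q$-integrable, constant, and Fatou's lemma together with the pointwise bound from the first paragraph yields
\begin{align*}
    \liminf_n \bbE_Q[f^*(h_n)] \geq \bbE_Q\left[\liminf_n f^*(h_n)\right] \geq \bbE_Q[f^*(h)],
\end{align*}
which completes the argument.

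I expect the only (minor) obstacle to be exactly this domination step — producing an integrable lower bound uniformly in $n$ and dispatching the degenerate case $f^*(-M) = \infty$ — everything else is routine. As an alternative route that avoids sequences altogether, one can apply Theorem~\ref{thm: roc68} with $g = f$, $L = L' = B(\Omega, \Sigma)$, and $u' = h$ to write $\bbE_Q[f^*(h)] = \sup_{u \in B(\Omega, \Sigma)} \bbE_Q[u h] - \bbE_Q[f(u)]$; for each fixed $u$ the map $h \mapsto \bbE_Q[uh]$ is continuous in the uniform norm since $\abs{\bbE_Q[uh]} \leq \norm{u}_\infty \norm{h}_\infty$, while $\bbE_Q[f(u)] \in (-\infty, \infty]$ is a constant (a proper lower semi-continuous convex $f$ is bounded below on the bounded range of $u$), so each summand is lower semi-continuous in $h$, and a pointwise supremum of lower semi-continuous functions is lower semi-continuous.
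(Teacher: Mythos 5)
Your primary argument is correct but follows a genuinely different route from the paper. The paper proves the lemma by invoking Theorem~\ref{thm: roc68} to represent $h\mapsto \bbE_Q[f^*(h)]$ as $\sup_{u\in B(\Omega,\Sigma)} \int_\Omega u\cdot h\,\diff Q - \int_\Omega f(u)\,\diff Q$, i.e.\ as a pointwise supremum of continuous affine functionals of $h$, which is automatically lower semi-continuous --- exactly the ``alternative route'' you sketch in your last sentences. Your main argument instead works at the level of sequences: metrizability of the norm topology reduces the claim to sequential lower semi-continuity, uniform convergence gives pointwise convergence, lower semi-continuity of $f^*$ (Fact~\ref{fact: fstar}) gives the pointwise liminf inequality, and the monotonicity of $f^*$ (Fact~\ref{fact: f-nondec}) supplies the uniform integrable lower bound $f^*(h_n)\geq f^*(-M)$ needed for Fatou, with the degenerate case $f^*(-M)=\infty$ correctly dispatched. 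This is more elementary and self-contained --- it avoids the Rockafellar interchange theorem entirely and uses only Fatou's lemma plus facts the paper has already established --- at the cost of being tied to the metrizable setting and to the monotonicity of $f^*$ (which here comes from $f(x)=\infty$ for $x<0$). The paper's duality route is shorter given that Theorem~\ref{thm: roc68} is already needed elsewhere (e.g.\ in the proof of Theorem~\ref{thm: extension}) and would generalize to non-metrizable topologies, but it leans on heavier machinery. Both arguments are sound.
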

\begin{proof}
Note that $B(\Omega, \Sigma)$ is a decomposable space (as defined in Definition~\ref{def: decomp}), and for any $u, u'\in B(\Omega, \Sigma)$, the function $u\cdot u'$ is integrable w.r.t. $Q$, and $f$ is lower semi-continuous proper convex function by Fact~\ref{fact: f}, then according to Theorem~\ref{thm: roc68}, 
\begin{align}
    \bbE_Q[f^*(h)] = \sup_{u\in B(\Omega, \Sigma)} \int_{\Omega}u\cdot h\diff Q - \int_{\Omega}f(u)\diff Q.\label{eq: sup-semi}
\end{align}
Note that for each $u\in B(\Omega, \Sigma)$, the function $h\mapsto \int_{\Omega}u\cdot h\diff Q$ defined on $B(\Omega, \Sigma)$ is a continuous linear function, therefore the r.h.s. of \eqref{eq: sup-semi} is the supremum of linear continuous functions, hence a lower semi-continuous function.
\end{proof}

\begin{lemma}\label{lem: utility}
    For any $h\in B(\Omega, \Sigma)$, $c\in\bbR$, sequence $\left\{c_n\right\}$ in $\bbR$, sequence $\left\{h_n\right\}$ in $B(\Omega, \Sigma)$, sequence $\left\{b_n\right\}$ in $\bbR$, if $h_n\to h$, $c_n\to c$, and $\bbE_Q[f^*(h_n + b_n)] - b_n \leq c_n$ for every $n$, then $\left\{b_n\right\}$ has a convergent subsequence whose limit point $b^*$ satisfies
    \begin{align*}
        \bbE_Q[f^*(h + b^*)] - b^* \leq c.
    \end{align*}
\end{lemma}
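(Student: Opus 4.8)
The plan is to show that the hypotheses force $\{b_n\}$ to be bounded; then Bolzano--Weierstrass yields a convergent subsequence, and the lower semicontinuity of $h\mapsto\bbE_Q[f^*(h)]$ from Lemma~\ref{lem: lower} lets us pass the inequality to the limit. The only genuinely nontrivial point is the boundedness of $\{b_n\}$.

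First I would record two elementary facts. Since $h_n\to h$ uniformly, the sequence $\{h_n\}$ is bounded, say $\norm{h_n}_\infty\le M$ for every $n$. Second --- and this is the crucial structural input --- I claim there are constants $\delta\in(0,1)$ and $C\ge 0$ with
$$f^*(s)-s\ \ge\ \delta\abs{s}-C\qquad\text{for all }s\in\bbR.$$
This is exactly where the assumption that $f$ is finite in a \emph{neighbourhood} of $1$ (not merely at $1$) is used: pick $\delta\in(0,1)$ small enough that $f(1+\delta)$ and $f(1-\delta)$ are both finite (possible since $f$ is finite near $1$, and $1-\delta\ge 0$ so we stay out of the region where $f=\infty$), and set $C=\max\{f(1+\delta),f(1-\delta),0\}$. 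Applying the definition $f^*(s)=\sup_x\,xs-f(x)\ge xs-f(x)$ at $x=1+\delta$ for $s\ge 0$ and at $x=1-\delta$ for $s<0$ gives the displayed inequality after rearranging. Informally, $f^*$ is strictly steeper than the line $s\mapsto s$ at both ends, which is what will pin down $\{b_n\}$.

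Next I would combine this growth bound with the hypothesis. For every $n$, the reverse triangle inequality gives $\abs{h_n+b_n}\ge\abs{b_n}-M$ pointwise, so the bound yields $f^*(h_n+b_n)-(h_n+b_n)\ge\delta(\abs{b_n}-M)-C$ pointwise; integrating against the probability measure $Q$ and using $\abs{\bbE_Q[h_n]}\le M$ gives
$$c_n\ \ge\ \bbE_Q[f^*(h_n+b_n)]-b_n\ \ge\ -M+\delta(\abs{b_n}-M)-C.$$
Since $c_n\to c$, the right-hand side must stay bounded, so $\sup_n\abs{b_n}<\infty$. By Bolzano--Weierstrass, pass to a subsequence with $b_{n_k}\to b^*\in\bbR$.

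Finally, the limit step. Since $h_{n_k}\to h$ in $B(\Omega,\Sigma)$ and $b_{n_k}\to b^*$ in $\bbR$, we have $h_{n_k}+b_{n_k}\to h+b^*$ in $B(\Omega,\Sigma)$, so Lemma~\ref{lem: lower} (lower semicontinuity of $g\mapsto\bbE_Q[f^*(g)]$) together with $b_{n_k}\to b^*$ gives
$$\bbE_Q[f^*(h+b^*)]-b^*\ \le\ \liminf_k\Bigl(\bbE_Q[f^*(h_{n_k}+b_{n_k})]-b_{n_k}\Bigr)\ \le\ \liminf_k c_{n_k}\ =\ c,$$
which is the assertion; along the subsequence every quantity is finite, since $\bbE_Q[f^*(h_{n_k}+b_{n_k})]$ is sandwiched between $-M+b_{n_k}$ and $c_{n_k}+b_{n_k}$, both bounded. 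I expect the main obstacle to be precisely the growth estimate $f^*(s)-s\ge\delta\abs{s}-C$ and the resulting boundedness of $\{b_n\}$: without the ``finite near $1$'' hypothesis this can fail (e.g.\ $f^*(s)=s$), and then $b_n$ could escape to $\pm\infty$; everything after extracting the convergent subsequence is a routine lower-semicontinuity argument.
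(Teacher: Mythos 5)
Your proof is correct and follows essentially the same route as the paper's: boundedness of $\{b_n\}$ from the superlinear growth of $s\mapsto f^*(s)-s$, then Bolzano--Weierstrass, then the lower semicontinuity of $g\mapsto\bbE_Q[f^*(g)]$ from Lemma~\ref{lem: lower} to pass to the limit. In fact your explicit derivation of the bound $f^*(s)-s\ge\delta\abs{s}-C$ from the finiteness of $f$ at $1\pm\delta$ supplies a detail the paper only asserts (namely that $f^*(x)-x\to\infty$ as $\abs{x}\to\infty$), and your direct boundedness argument replaces the paper's proof by contradiction without changing the substance.
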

\begin{proof}
We first prove that the sequence $\left\{b_n\right\}$ is bounded. We prove this by contradiction. Suppose $\left\{b_n\right\}$ is not bounded, since $h_n\to h$ and $h\in B(\Omega, \Sigma)$, we have that for any $t > 0$, there exists $n_t$ such that 
\begin{align}
\left|h_{n_t} + b_{n_t}\right| \geq t.
\end{align}
However, by assumpition $f$ is finite in a neighbourhood of $1$, along with Fact~\ref{fact: double-f}, this implies that $f^*(x) - x\to \infty$ as $|x|\to\infty$. Therefore we have 
\begin{align*}
    \bbE_Q\left[f^*(h_{n_t} + b_{n_t}) - (h_{n_t} + b_{n_t})\right]\to\infty\text{\ as $t\to\infty$.}
\end{align*}
Because $h_n\to h$ and $h\in B(\Omega, \Sigma)$, we have that $h_{n_t}$ is bounded for $t > 0$, therefore 
\begin{align*}
    \bbE_Q\left[f^*(h_{n_t} + b_{n_t}) - b_{n_t}\right]\to\infty\text{\ as $t\to\infty$,}
\end{align*}
which contradicts the assumption that $\bbE_Q[f^*(h_n + b_n)] - b_n \leq c_n$ for every $n$ because $c_n\to c$. 

Now by Bolzano-Weierstrass theorem, the bounded sequence $\left\{b_n\right\}$ has a convergent subsequence $\left\{b_{i_n}\right\}$, whose limit point we denote by $b^*$. Let $\epsilon > 0$ be any positive real number, we will show that
\begin{align*}
    \bbE_Q[f^*(h + b^*)] \leq c + b^* + \epsilon.
\end{align*}
Because by assumption $b_{i_n}\to b^*$ and $c_{i_n}\to c$ and $\bbE_Q[f^*(h_{i_n} + b_{i_n})] - b_{i_n} \leq c_{i_n}$ for every $n$, we have that for $n$ large enough
\begin{align*}
    \bbE_Q\left[f^*\left(h_{i_n} + b_{i_n}\right)\right] \leq c + b^* + \epsilon.
\end{align*}
Lemma~\ref{lem: lower} says that the function $h\mapsto\bbE_Q\left[f^*\left(h\right)\right]$ is lower semi-continuous, since $h_{i_n} + b_{i_n} \to h + b^*$, this implies that 
\begin{align*}
    \bbE_Q\left[f^*\left(h + b^*\right)\right] \leq c + b^* + \epsilon.
\end{align*}
Because we can choose $\epsilon$ to be arbitrarily small, we can conclude that 
\begin{align*}
    \bbE_Q[f^*(h + b^*)] \leq c + b^*.
\end{align*}
\end{proof}

\begin{lemma}\label{lem: attained}
The infimum in the definition of $\calR$ as in \eqref{eq: R-definition} can be attained.
\end{lemma}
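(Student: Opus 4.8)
The plan is to reduce the statement directly to Lemma~\ref{lem: utility} applied to a minimizing sequence. First I would dispose of a degenerate case: if $\calR(h) = +\infty$, then $\bbE_Q\left[f^*(h + b)\right] - b = +\infty$ for every $b\in\bbR$, so the infimum in \eqref{eq: R-definition} is (trivially) attained, say at $b = 0$. So assume henceforth $\calR(h) < +\infty$. I also record that $\calR(h) > -\infty$: by Fact~\ref{fact: f-sup} we have $f^*(t)\geq t$ for all $t\in\bbR$, hence $\bbE_Q\left[f^*(h + b)\right] - b \geq \bbE_Q\left[h + b\right] - b = \bbE_Q[h]$, which is finite since $h$ is bounded; taking the infimum over $b$ gives $\calR(h)\geq \bbE_Q[h] > -\infty$. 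Thus $\calR(h)\in\bbR$.

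Next I would pick a minimizing sequence: since $\calR(h)$ is a finite infimum, choose $\left\{b_n\right\}\subseteq \bbR$ with $c_n \bydef \bbE_Q\left[f^*(h + b_n)\right] - b_n \to \calR(h)$. Now invoke Lemma~\ref{lem: utility} with the constant sequence $h_n \equiv h$, the sequence $\left\{c_n\right\}$ just defined, the sequence $\left\{b_n\right\}$, and $c \bydef \calR(h)\in\bbR$. All hypotheses hold: $h_n\to h$, $c_n\to c$, and $\bbE_Q\left[f^*(h_n + b_n)\right] - b_n = c_n \leq c_n$. Lemma~\ref{lem: utility} therefore yields a subsequence of $\left\{b_n\right\}$ converging to some $b^*\in\bbR$ with $\bbE_Q\left[f^*(h + b^*)\right] - b^* \leq c = \calR(h)$.

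Finally, the reverse inequality $\bbE_Q\left[f^*(h + b^*)\right] - b^* \geq \calR(h)$ holds by the very definition of $\calR(h)$ as an infimum over $b$, so $\bbE_Q\left[f^*(h + b^*)\right] - b^* = \calR(h)$, i.e., the infimum is attained at $b^*$. I do not expect a genuine obstacle here, since all the analytic content — boundedness of the minimizing sequence and the use of lower semicontinuity of $h\mapsto\bbE_Q[f^*(h)]$ — is already packaged inside Lemma~\ref{lem: utility}. The only points requiring care are the two finiteness checks in the first paragraph, which guarantee both that ``attained'' is meaningful (no $\pm\infty$ pathology) and that a real-valued minimizing sequence with $c_n\to c\in\bbR$ actually exists, so that Lemma~\ref{lem: utility} is applicable.
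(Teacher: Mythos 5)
Your proof is correct and follows essentially the same route as the paper's: extract a minimizing sequence $\{b_n\}$ and apply Lemma~\ref{lem: utility} with the constant sequence $h_n \equiv h$ to obtain a limit point $b^*$ achieving the infimum. The only difference is that you explicitly verify that $\calR(h)$ is finite (handling $+\infty$ trivially and ruling out $-\infty$ via Fact~\ref{fact: f-sup}), a check the paper's proof leaves implicit even though Lemma~\ref{lem: utility} needs $c_n \to c \in \bbR$; this is a worthwhile addition rather than a deviation.
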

\begin{proof}
Let $t = \inf_{b\in\bbR}\bbE_Q\left[f^*\left(h + b\right)\right] - b$. We need to show that there exists $b^*$ such that $\epsilon_n\to 0$ and
\begin{align*}
    \bbE_Q\left[f^*\left(h + b^*\right)\right] - b^* = t.
\end{align*}
By the definition of infimum, there exists a sequence $\left\{\epsilon_n\right\}$ in $\bbR$ and a sequence $\left\{b_n\right\}$ in $\bbR$ such that 
\begin{align*}
    \bbE_Q\left[f^*\left(h + b_n\right)\right] - b_n \leq t + \epsilon_n
\end{align*}
Applying Lemma~\ref{lem: utility}, where $c_n = t + \epsilon_n$, $c = t$, and $h_n = h$, we have that there exists a subsequence of $\left\{b_n\right\}$ whose limit point $b^*$ satisfies
\begin{align*}
    \bbE_Q\left[f^*\left(h + b^*\right)\right] - b^* \leq t.
\end{align*}
Therefore the infimum in the definition of $\calR$ is attained at $b^*$.

\end{proof}

\begin{lemma}\label{lem: 4properties}
The functional $\calR$ has the following properties
\begin{enumerate}
    \item[$(\calR 1)$]$\calR$ is lower semi-continuous,
    \item[$(\calR 2)$]$\calR$ is convex,
    \item[$(\calR 3)$]For any $h, h'\in B(\Omega, \Sigma)$, if $h\leq h'$, then $\calR(h) \leq \calR(h')$. 
    \item[$(\calR 4)$]If $C\in B(\Omega, \Sigma)$ is a constant function, then $\calR(C) = C$.
\end{enumerate}
\end{lemma}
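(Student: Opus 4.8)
The plan is to treat the four properties in increasing order of difficulty, with the bulk of the work going into $(\calR 1)$. Throughout write $F(h, b) \bydef \bbE_Q[f^*(h + b)] - b$, so that $\calR(h) = \inf_{b\in\bbR} F(h, b)$, and record at the outset the elementary pointwise bound $f^*(t) \ge t$ (a rearrangement of Fact~\ref{fact: f-sup}), which gives $F(h, b) \ge \bbE_Q[h]$ for every $b$ and hence $\calR(h) \ge \bbE_Q[h] > -\infty$ for all $h\in B(\Omega, \Sigma)$.

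\emph{Properties $(\calR 2)$, $(\calR 3)$, $(\calR 4)$.} For $(\calR 4)$, since $Q$ is a probability measure and $C + b$ is a constant function, $F(C, b) = f^*(C + b) - b$; the substitution $t = C + b$ turns $\inf_b F(C, b)$ into $C - \sup_{t\in\bbR}(t - f^*(t))$, which equals $C$ by Fact~\ref{fact: f-sup}. For $(\calR 3)$, if $h \le h'$ then $h + b \le h' + b$ pointwise, and since $f^*$ is non-decreasing (Fact~\ref{fact: f-nondec}) we get $F(h, b) \le F(h', b)$ for every $b$; taking the infimum over $b$ gives $\calR(h) \le \calR(h')$. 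For $(\calR 2)$, $f^*$ is convex (Fact~\ref{fact: fstar}), so $(h, b)\mapsto f^*(h + b)$ is jointly convex, hence so is $(h, b)\mapsto F(h, b)$ after integrating and subtracting the linear term $b$; the partial infimum of a jointly convex function over one of its arguments is convex, giving convexity of $\calR$.

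\emph{Property $(\calR 1)$.} Since $B(\Omega, \Sigma)$ is a normed space and hence metrizable, it suffices to prove sequential lower semi-continuity. Fix $h$, take any $h_n \to h$, and set $c \bydef \liminf_n \calR(h_n)$; by the bound above $c \ge \bbE_Q[h] > -\infty$, and if $c = \infty$ there is nothing to show, so assume $c\in\bbR$. Pass to a subsequence along which $\calR(h_n) \to c$ and, discarding the finitely many indices where it is infinite, use Lemma~\ref{lem: attained} to choose $b_n$ with $F(h_n, b_n) = \calR(h_n)$. Applying Lemma~\ref{lem: utility} with $c_n \bydef \calR(h_n) \to c$ produces a further subsequence $b_{i_n} \to b^*$ satisfying $F(h, b^*) \le c$, so $\calR(h) \le F(h, b^*) \le c = \liminf_n \calR(h_n)$, which is exactly lower semi-continuity at $h$.

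The main obstacle is $(\calR 1)$: $\calR$ is an infimum over $b$ of the map $(h, b)\mapsto F(h, b)$, which is jointly lower semi-continuous (Lemma~\ref{lem: lower} precomposed with the continuous map $(h, b)\mapsto h + b$ and corrected by the continuous term $-b$), but a partial infimum of a lower semi-continuous function need not be lower semi-continuous. The reason it works here is the boundedness of near-optimal shifts established in Lemma~\ref{lem: utility}, which rests on the superlinear growth of $t\mapsto f^*(t) - t$ coming from the hypothesis that $f$ is finite in a neighbourhood of $1$. So the real difficulty has already been isolated in Lemmas~\ref{lem: attained} and~\ref{lem: utility}; proving $(\calR 1)$ amounts to assembling them correctly.
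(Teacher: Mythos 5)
Your proof is correct and follows essentially the same route as the paper's: $(\calR 1)$ is assembled from Lemmas~\ref{lem: attained} and~\ref{lem: utility} exactly as the paper does (you phrase it via $\liminf$ rather than via closed sublevel sets, and usefully record the lower bound $\calR(h)\ge\bbE_Q[h]$ to rule out the value $-\infty$), while $(\calR 3)$ and $(\calR 4)$ match the paper's arguments verbatim. The only cosmetic difference is in $(\calR 2)$, where you invoke the standard fact that a partial infimum of a jointly convex function is convex, whereas the paper writes out the equivalent explicit $\epsilon$-argument with near-optimal shifts $b_1, b_2$.
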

\begin{proof}
We will prove ($\calR1$)-($\calR 4$) separately:
\item\paragraph{Proof of ($\calR 1$). }
We need to show that for any $t\in\bbR$, $h\in B(\Omega, \Sigma$), and any sequence $\left\{h_n\right\}$ in $B(\Omega, \Sigma)$ such that $h_n\to h$ and $\inf_{b\in\bbR}\bbE_Q\left[f^*\left(h_n + b\right)\right] - b \leq t$ for any $n$, we have that 
\begin{align}
\inf_{b\in\bbR}\, \bbE_Q\left[f^*\left(h + b\right)\right] - b \leq t.\label{eq: implied}
\end{align}
Lemma~\ref{lem: attained} guarantees that there exists a sequence $\left\{\epsilon_n\right\}$ in $\bbR$ and a seqeunce $\left\{b_n\right\}$ in $\bbR$ such that for any $n$,
\begin{align*}
    \bbE_Q\left[f^*\left(h_n + b_n\right)\right] - b_n \leq t + \epsilon_n.
\end{align*}
Applying Lemma~\ref{lem: utility}, where $c_n = t + \epsilon_n$, $c = t$, we have that there exists $b^*\in\bbR$ such that 
\begin{align*}
    \bbE_Q\left[f^*\left(h + b^*\right)\right] - b^* \leq t,
\end{align*}
which will imply \eqref{eq: implied}.

\item\paragraph{Proof of ($\calR 2$). } For any $\lambda\in(0, 1)$, $\lambda' = 1 - \lambda$, $h, h'\in B(\Omega, \Sigma)$, we need to show that 
\begin{align}
    \inf_{b\in\bbR}\, \bbE_Q\left[f^*\left(\lambda h + \lambda' h' + b\right)\right] - b \leq \lambda\underbrace{\inf_{b\in\bbR}\left(\bbE_Q\left[f^*\left(h + b\right)\right] - b\right)}_{(a)}  + \lambda'\underbrace{\inf_{b\in\bbR}(\bbE_Q\left[f^*\left(h' + b\right)\right] - b)}_{(b)}.\label{eq: needToVerify}
\end{align}
If either (a) or (b) is infinite, then \eqref{eq: needToVerify} is trivially true; therefore we assume both of them to be finite. In this case, for any $\epsilon > 0$, there exists $b_1$ and $b_2$ such that 
\begin{align}
    \bbE_Q\left[f^*\left(h + b_1\right)\right] - b_1 \leq \inf_{b\in\bbR}(\bbE_Q\left[f^*\left(h + b\right)\right] - b) + \epsilon \label{eq: bound1}
\end{align}
and
\begin{align}
    \bbE_Q\left[f^*\left(h' + b_2\right)\right] - b_2 \leq \inf_{b\in\bbR}(\bbE_Q\left[f^*\left(h' + b\right)\right] - b) + \epsilon. \label{eq: bound2}
\end{align}
    We can see that
\begin{align*}
    \inf_{b\in\bbR}\, \bbE_{Q}\left[f^*\left(\lambda h + \lambda' h' + b\right)\right] - b &\leq \bbE_{Q}\left[f^*\left(\lambda h + \lambda' h' + \lambda b_1 + \lambda' b_2\right)\right] - \lambda b_1 - \lambda' b_2\\
                                                                                          &\stackrel{(a)}{\leq} \bbE_{Q}\left[\lambda f^*\left( h + \lambda b_1\right) + \lambda' f^*\left( h' + \lambda b_2\right)\right] - \lambda b_1 - \lambda' b_2\\
                                                                                          &= \lambda\left(\bbE_{Q}\left[f^*\left( h + \lambda b_1\right) - b_1\right) + \lambda'\left(f^*\left( h' + \lambda b_2\right)\right] - b_2\right)\\
                                                                                          &\leq \lambda \inf_{b\in\bbR}\left(\bbE_{Q}\left[f^*\left(h + b\right)\right] - b\right) + \lambda'\inf_{b\in\bbR}\left(\bbE_{Q}\left[f^*\left(h' + b\right)\right] - b\right) + \epsilon
\end{align*}
where (a) is due to the convexity of $f^*$, and (b) is due to \eqref{eq: bound1} and \eqref{eq: bound2}. Since $\epsilon$ can be made arbitrarily small, we have that \eqref{eq: needToVerify} is true.

\item\paragraph{Proof of ($\calR 3$). } For any $h, h'\in B(\Omega, \Sigma)$ such that $h \leq h'$, we have
\begin{align*}
    \calR(h)= \inf_{b\in\bbR}\, \bbE_{Q}\left[f^*\left(h + b\right)\right] - b\stackrel{(a)}{\leq} \inf_{b\in\bbR}\, \bbE_{Q}\left[f^*\left(h' + b\right)\right] - b = \calR(h'),
\end{align*}
where (a) is from Fact~\ref{fact: f-nondec}.

\item\paragraph{Proof of ($\calR 4$). } Note that 
\begin{align}
    \calR(C) &= \inf_{b\in\bbR}\, \bbE_{Q}\left[f^*\left(C + b\right)\right] - b\nonumber\\
             &= \inf_{b\in\bbR}\, f^*\left(C + b\right) - (C + b) + C\nonumber\\
             &= \inf_{t\in\bbR}\left(f^*\left(t\right) - t\right) + C\label{eq: useProperty}\\
             &\stackrel{(a)}{=} C\nonumber,
\end{align}
where (a) is from Fact~\ref{fact: f-sup}.
\end{proof}

Returning to our main proof, from ($\calR1$) and ($\calR2$), $\calR$ is a lower semi-continuous convex function over $B(\Omega, \Sigma)$. ($\calR4$) implies that $\calR(0) = 0$, therefore $\calR$ is finite at some point. Now we know that $\calR$ is a lower semi-continuous convex function that is finite at some point, therefore $\calR$ is a proper lower semi-continuous convex function. According to Theorem~\ref{thm: dual}, the conjugate of $\calR$ can be defined on $ba(\Omega, \Sigma)$, written as
\begin{align*}
    \calR^*(\mu) = \sup_{h\in B(\Omega, \Sigma)} \int_{\Omega} h \diff \mu - \calR(h). 
\end{align*}
Therefore from Fact~\ref{fact: b-hausdorff} and Theorem~\ref{thm: fm} we can conclude that for any $h\in B(\Omega, \Sigma)$,
\begin{align}
\calR (h) = \calR^{**}(h) = \sup_{P'\in ba(\Omega, \Sigma)}\bbE_{P'}\left[X\right] - \calR ^*\left(P'\right).\label{eq: arrive-at}
\end{align}

We will need the following lemma regarding $\calR^*$.

\begin{lemma}\label{lem: Rdual}
For any $P'\in ba(\Omega, \Sigma)$,
\begin{align*}
    \calR^*\left(P'\right) = \begin{dcases}
    \bar{D}_f(P'||Q), &\text{if $P'\in \calP(\Omega, \Sigma)$ and $P'\ll Q$},\\
    \infty, &\text{otherwise}.
    \end{dcases}
\end{align*}
\end{lemma}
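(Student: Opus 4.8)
The plan is to compute $\calR^*(P') = \sup_{h \in B(\Omega,\Sigma)} \int_\Omega h \diff P' - \calR(h)$ directly from the definition $\calR(h) = \inf_{b\in\bbR}\bbE_Q[f^*(h+b)] - b$, and to split the argument into three regimes according to whether $P'$ is a probability measure, and whether $P' \ll Q$. First I would dispose of the two ``otherwise'' cases. If $P'$ is not a probability measure, then either $P'(\Omega) \neq 1$ or $P'$ is not nonnegative; using that $\calR$ is shift-covariant on constants ($\calR4$: $\calR(C) = C$) and that $\calR(h) = \calR(h+b) - b$ is \emph{not} quite right — actually $\calR$ satisfies $\calR(h+b) = \calR(h) + b$ directly from \eqref{eq: R-definition} — plugging $h \equiv b$ a constant gives $\int b \diff P' - \calR(b) = b(P'(\Omega) - 1) \to \infty$ if $P'(\Omega) \neq 1$; and plugging $h = -t\cdot 1_E$ for a set $E$ with $P'(E) < 0$, using monotonicity ($\calR3$) to bound $\calR(-t\cdot 1_E) \leq \calR(0) = 0$, gives $-tP'(E) \to \infty$ as $t\to\infty$. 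So $\calR^*(P') = \infty$ unless $P' \in \calP(\Omega,\Sigma)$. For the case $P' \in \calP(\Omega,\Sigma)$ but $P' \not\ll Q$: pick $E \in \Sigma$ with $Q(E) = 0$ but $P'(E) > 0$, and consider $h = t\cdot 1_E$; by Lemma~\ref{lem: restrict-int}, $\calR(t\cdot 1_E) = \calR(0) = 0$ (since $\calR$ only ``sees'' $\Omega \setminus E$), while $\int h\diff P' = tP'(E) \to \infty$, so again $\calR^*(P') = \infty$.

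The main case is $P' \in \calP(\Omega,\Sigma)$ with $P' \ll Q$, where I must show $\calR^*(P') = \bar D_f(P'||Q)$. Here I would exploit shift-invariance to absorb the infimum over $b$: since $\calR(h) = \inf_{b}\bbE_Q[f^*(h+b)] - b$, we have
\begin{align*}
    \calR^*(P') = \sup_{h} \int h \diff P' - \inf_b \left(\bbE_Q[f^*(h+b)] - b\right) = \sup_{h}\sup_b \left(\int h\diff P' - \bbE_Q[f^*(h+b)] + b\right).
\end{align*}
Now, for fixed $b$, substitute $h' = h + b$ (a bijection of $B(\Omega,\Sigma)$), so the inner expression becomes $\int h'\diff P' - b + b - \bbE_Q[f^*(h')] = \int h' \diff P' - \bbE_Q[f^*(h')]$ \emph{when $P'(\Omega) = 1$} — this is exactly where we need $P'$ to be a probability measure: $\int(h'-b)\diff P' + b = \int h'\diff P' - bP'(\Omega) + b = \int h'\diff P'$. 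Hence the double supremum collapses to $\sup_{h'\in B(\Omega,\Sigma)} \bbE_{P'}[h'] - \bbE_Q[f^*(h')]$, which by definition \eqref{eq: f-div-definition-new}/\eqref{eq: to-be-justified} is precisely $\bar D_f(P'||Q)$ (and equals $D_f(P'||Q)$ via Theorem~\ref{thm: extension} when additionally $P' \in \calP_c$).

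I expect the routine but slightly delicate points to be: (i) making sure the constant-function and indicator-function test arguments in the two ``otherwise'' cases are using only properties already established — namely $\calR(h+b)=\calR(h)+b$ from the definition, monotonicity $(\calR3)$, $\calR(0)=0$ from $(\calR4)$, and Lemma~\ref{lem: restrict-int} — and (ii) in the non-nonnegative sub-case, handling finitely additive $P'$ carefully, since ``$P'$ not a probability measure'' for $P' \in ba$ means $P'(\Omega)\neq 1$ or $P'$ takes a negative value on some measurable set, and one should note $P'\ll Q$ need not even be assumed there. The genuine crux, though, is recognizing that the whole content of the main case is the bookkeeping identity $\int(h-b)\diff P' + b = \int h\diff P'$ valid exactly because $P'(\Omega)=1$, which lets the $\inf_b$ in $\calR$ cancel against itself and reduces $\calR^*$ to the variational formula for $\bar D_f$. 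No hard analysis is needed beyond what Lemmas~\ref{lem: restrict-int}--\ref{lem: 4properties} and Theorem~\ref{thm: extension} already provide.
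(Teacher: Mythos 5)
Your proposal is correct and follows essentially the same route as the paper's proof: the same test functions (negative multiples of indicators with $(\calR 3)$ and $\calR(0)=0$, indicators of $Q$-null sets with Lemma~\ref{lem: restrict-int}, constants with $(\calR 4)$) dispose of the degenerate cases, and the main case is the same absorption of the $\inf_b$ into the supremum using $P'(\Omega)=1$. The only cosmetic difference is that you organize the ``otherwise'' regime as two sub-cases of ``not a probability measure'' rather than the paper's four-case list, which changes nothing of substance.
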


\begin{proof}
For any $P'\in ba(\Omega, \Sigma)$, there are four possibilities:
\item\paragraph{Case 1.} If $P'$ is such that there exists $E\in\Sigma$ with $P'(E) < 0$, then for any $k > 0$, 
\begin{align*}
    \calR^*\left(P'\right)  &= \sup_{h\in B(\Omega, \Sigma)} \bbE_{P'}\left[h\right] - \calR(h)\\
    &\geq \bbE_{P'}\left[-k \cdot 1_E\right] - \calR(-k \cdot 1_E)\\
    &\stackrel{(a)}{\geq} -k\cdot P'(E) - \calR(0)\\
    &\stackrel{(b)}{=} -k\cdot P'(E) \\
    &\to \infty \text{\ as $k\to\infty$},
\end{align*}
Here (a) is due to ($\calR 3$) since $-k\cdot 1_E \leq 0$, and (b) is due to ($\calR 4$) with $C = 0$.

\item\paragraph{Case 2.}
If $P'$ is nonnegative and $P'\not\ll Q$, then there exists $E\in\Sigma$ such that $P'(E) > 0$ and $Q(E) = 0$, and for any $k > 0$, 
\begin{align*}
    \calR^*\left(P'\right)  &= \sup_{h\in B(\Omega, \Sigma)} \bbE_{P'}\left[h\right] - \calR(h)\\
    &\geq \bbE_{P'}\left[k \cdot 1_E\right] - \calR(k \cdot 1_E)\\
    &\stackrel{(a)}{=} k\cdot P'(E) - \calR(0)\\
    &\stackrel{(b)}{=} k\cdot P'(E) \\
    &\to \infty \text{\ as $k\to\infty$},
\end{align*}
where (a) is from Lemma~\ref{lem: restrict-int} and (b) is due to ($\calR 4$) with $C = 0$.

\item\paragraph{Case 3. }If $P'$ is non-negative, $P'\ll Q$, and $P'(\Omega)\neq 1$, then for any $k\in\bbR$, 
\begin{align*}
    \calR^*\left(P'\right)  &= \sup_{h\in B(\Omega, \Sigma)} \bbE_{P'}\left[h\right] - \calR(h)\\
    &\geq \bbE_{P'}\left[k\right] - \calR(k)\\
    &\stackrel{(a)}{=} k\bbE_{P'}[1] - k\\
    &\to \infty \text{\ either as $k\to\infty$ or as $k\to -\infty$},
\end{align*}
where (a) is due to ($\calR 4$). 

\item\paragraph{Case 4. }If $P'\in \calP(\Omega, \Sigma)$ and $P'\ll \calP(\Omega, \Sigma)$, then
\begin{align*}
    \calR^*\left(P'\right) &= \sup_{h\in B(\Omega, \Sigma)} \bbE_{P'}\left[h\right] - \calR(h)\\
                            &= \sup_{h\in B(\Omega, \Sigma)} \bbE_{x\sim P'}\left[h(x)\right] -\inf_{b\in\bbR}\left(\bbE_{Q}\left[f^*\left(h + b\right)\right] - b\right)\\
                                                 &= \sup_{h\in B(\Omega, \Sigma)} \bbE_{P'}\left[h\right] +\sup_{b\in\bbR}\left(-\bbE_{Q}\left[f^*\left(h + b\right)\right] + b\right)\\
                                                 &= \sup_{h\in B(\Omega, \Sigma)}\sup_{b\in\bbR} \bbE_{ P'}\left[h+b\right] -\bbE_{ Q}\left[f^*\left(h + b\right)\right]\\
                                                 &= \sup_{h\in B(\Omega, \Sigma)} \bbE_{ P'}\left[h\right] -\bbE_{ Q}\left[f^*\left(h\right)\right]\\
                                                 &\stackrel{(a)}{=} \bar{D}_f(P'||Q).
\end{align*}
\end{proof}

Therefore, for any $h\in B(\Omega, \Sigma)$, 
\begin{align}
             \calR (h) &\stackrel{(a)}{=} \sup_{P'\in ba(\Omega, \Sigma)}\bbE_{P'}\left[h\right] - \calR^*(P')
             \stackrel{(b)}{=} \sup_{P'\in\calP(\Omega, \Sigma)}\bbE_{P'}\left[h\right] - \bar{D}_f(P'||Q),\label{eq: riskdual2}
\end{align}
where (a) is due to \eqref{eq: arrive-at} and (b) is due to Lemma~\ref{lem: Rdual}.

Therefore, 
\begin{align}
  D_{f, \calH}(P||Q) &=  \sup_{h\in B(\Omega, \Sigma)} \bbE_{x\sim P}[h(x)] - \bbE_{x\sim Q}[f^*(h(x))] - \calH(h)\\
  &= \sup_{h\in B(\Omega, \Sigma), b\in\bbR}\left( \bbE_{x \sim P}[h(x) + b] - \bbE_{x\sim Q}\left[f^*\left(h(x) + b\right)\right] - \calH(h + b)\right)\nonumber\\
    &\stackrel{(a)}{=} \sup_{h\in B(\Omega, \Sigma), b\in\bbR}\left( \bbE_{x \sim P}[h(x)] + b - \bbE_{x\sim Q}\left[f^*\left(h(x) + b\right)\right] - \calH(h)\right)\nonumber\\
                        &\stackrel{(b)}{=} \sup_{h\in B(\Omega, \Sigma)}\left( \bbE_{P}[h] - \calR(h) - \calH(h)\right)\nonumber\\      
                        &\stackrel{(c)}{=} \sup_{h\in B(\Omega, \Sigma)}\left(\bbE_{P}[h] - \sup_{P'\in\calP(\Omega, \Sigma)}\left(\bbE_{P'}\left[h\right] - \bar{D}_f(P'||Q)\right) - \calH(h)\right)\nonumber\\
                       &= \sup_{h\in B(\Omega, \Sigma)}\inf_{P'\in\calP(\Omega, \Sigma)}  \bbE_{P}[h] -  \bbE_{P'}[h]  - \calH(h) + \bar{D}_f(P'||Q)\nonumber\\
                         &\stackrel{(d)}{=}\inf_{P'\in\calP(\Omega, \Sigma)} \sup_{h\in B(\Omega, \Sigma)} \bbE_{P}[h] -  \bbE_{P'}[h]  - \calH(h) + \bar{D}_f(P'||Q)\nonumber\\
                         &\stackrel{(e)}{=}\inf_{P'\in\calP(\Omega, \Sigma)} \calH^*(P - P') + \bar{D}_f(P'||Q)\nonumber\\ &\stackrel{(f)}{=}\inf_{\substack{P'\in\calP(\Omega, \Sigma)\\P'\ll Q}} \calH^*(P - P') + \bar{D}_f(P'||Q),\nonumber
\end{align}
where (a) is because by assumption $\calH$ is shift invariant and the constant $b$ can be moved out of the expectation; (b) is due to the definition of $\calR$ in \eqref{eq: R-definition}; (c) is due to \eqref{eq: riskdual2}; (d) is due to Lemma~\ref{lem: kyfan} below; (e) is by the definition of convex conjugate; and (f) is because we have shown that $D_f(P'||Q) = \calR^*(P')$ and $\calR^*(P')$ is infinite  when $P'\not\ll Q$.

\begin{lemma}\label{lem: kyfan}
Define $F: \calP(\Omega, \Sigma)\times B(\Omega, \Sigma) \to(-\infty, \infty]$ to be
\begin{align*}
    F\left(P', h\right) &\bydef \bbE_{P}[h] -  \bbE_{P'}[h] - \calH(h) + \bar{D}_f(P'||Q).
\end{align*}
Then, 
\begin{align}
    \sup_{h\in H} \inf_{P'\in\calP(\Omega, \Sigma)} F\left(P', h\right) = \inf_{P'\in\calP(\Omega, \Sigma)} \sup_{h\in H} F\left(P', h\right). \label{eq: minimax-eq}
\end{align}
\end{lemma}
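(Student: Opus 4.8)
The plan is to read off the minimax identity \eqref{eq: minimax-eq} from Ky Fan's minimax theorem, with the infimum running over the \emph{compact} factor $\calP(\Omega,\Sigma)$ and the supremum over (a convex subset of) $B(\Omega,\Sigma)$, which need not be given any topology. One inequality, $\sup_h\inf_{P'}F\le\inf_{P'}\sup_hF$, is automatic, so the whole content is the reverse inequality, for which Ky Fan's theorem asks for three things: compactness of $\calP(\Omega,\Sigma)$ in a suitable topology; convexity and lower semicontinuity of $P'\mapsto F(P',h)$ for each fixed $h$; and concavity of $h\mapsto F(P',h)$ for each fixed $P'$.

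I would verify these in turn. For \emph{compactness}: by Theorem~\ref{thm: dual}, $ba(\Omega,\Sigma)=B(\Omega,\Sigma)^*$; every $P'\in\calP(\Omega,\Sigma)$ lies in the unit ball of $ba(\Omega,\Sigma)$, and $\calP(\Omega,\Sigma)$ is weak* closed because the defining conditions $\langle 1_E,\mu\rangle\ge 0$ ($E\in\Sigma$) and $\langle 1_\Omega,\mu\rangle=1$ are preserved under weak* limits, so Banach--Alaoglu makes $\calP(\Omega,\Sigma)$ weak* compact. For the dependence on $P'$, write $F(P',h)=\bigl(\bbE_P[h]-\calH(h)\bigr)-\bbE_{P'}[h]+\bar{D}_f(P'||Q)$: the bracket is a constant, $P'\mapsto-\bbE_{P'}[h]=-\langle h,P'\rangle$ is linear and weak*-continuous, and $P'\mapsto\bar{D}_f(P'||Q)$ coincides on $\calP(\Omega,\Sigma)$ with $\calR^*(P')$ by Lemma~\ref{lem: Rdual}; being the convex conjugate of $\calR$, i.e.\ a supremum of the weak*-continuous affine maps $P'\mapsto\bbE_{P'}[h']-\calR(h')$, it is convex and weak* lower semicontinuous, so $F(\cdot,h)$ is convex and weak* lower semicontinuous. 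Finally, for fixed $P'$ the map $h\mapsto\bbE_P[h]-\bbE_{P'}[h]+\bar{D}_f(P'||Q)$ is affine and $-\calH(h)$ is concave because $\calH$ is convex by hypothesis, so $F(P',\cdot)$ is concave. Ky Fan's theorem then yields \eqref{eq: minimax-eq}.

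The point that requires care — and where I expect the argument to be fiddly, rather than in any of the convexity checks — is that $F$ is extended-real-valued: $-\calH$ may be $-\infty$ and $\bar{D}_f(\cdot||Q)$ may be $+\infty$. I would dispose of the former by replacing $B(\Omega,\Sigma)$ with the convex set $\mathrm{dom}\,\calH=\{h:\calH(h)<\infty\}$, which changes neither side of \eqref{eq: minimax-eq} — any $h\notin\mathrm{dom}\,\calH$ makes the inner infimum $-\infty$ (take $P'=Q$, where $\bar{D}_f(Q||Q)=0$) and is negligible in the outer supremum — and on $\calP(\Omega,\Sigma)\times\mathrm{dom}\,\calH$ the function $F$ is $(-\infty,\infty]$-valued; the remaining possibility $\bar{D}_f(P'||Q)=+\infty$ is then harmless, since it occurs only on the minimizing factor, which is precisely where an $(-\infty,\infty]$-valued lower-semicontinuous convex function is permitted in the convex--concave form of Ky Fan's theorem (equivalently, one may intersect $\calP(\Omega,\Sigma)$ with the weak*-closed convex sublevel sets $\{P':\bar{D}_f(P'||Q)\le M\}$, still weak* compact, and pass to the limit). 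The crux of the proof is therefore the two topological inputs in the middle paragraph — weak* compactness of $\calP(\Omega,\Sigma)$ and weak* lower semicontinuity of $P'\mapsto\bar{D}_f(P'||Q)$ — both of which rest on the identification $B(\Omega,\Sigma)^*=ba(\Omega,\Sigma)$.
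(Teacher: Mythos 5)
Your proposal is correct and follows essentially the same route as the paper's proof: Ky Fan's minimax theorem applied with the weak* topology on $\calP(\Omega,\Sigma)\subseteq ba(\Omega,\Sigma)=B(\Omega,\Sigma)^*$, compactness via Banach--Alaoglu plus weak* closedness of the constraints $\left<1_E,\mu\right>\geq 0$ and $\left<1_\Omega,\mu\right>=1$, lower semicontinuity and convexity of $P'\mapsto\bar{D}_f(P'||Q)$ as a supremum of weak*-continuous affine maps, and concavity in $h$ from convexity of $\calH$. Your extra care about the $(-\infty,+\infty]$-valued nature of $F$ (restricting to $\mathrm{dom}\,\calH$) is a sensible refinement the paper passes over silently, but it does not change the substance of the argument.
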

\begin{proof}
Here we will use Ky Fan's minimax theorem (\cite{fan1953minimax}, theorem 2), which says that \eqref{eq: minimax-eq} is true as long as we can equip $\calP(\Omega, \Sigma)$ with certain topology such that \begin{enumerate}
\item[(a)]
$B(\Omega, \Sigma)$ and $\calP(\Omega, \Sigma)$ are convex.
\item[(b)]
$\calP(\Omega, \Sigma)$ is compact.
\item[(c)]
$F$ is convex and lower semi-continuous on $P'$. 
\item[(d)]
$F$ is concave $h$.
\end{enumerate}

We will first show what topology on $\calP(\Omega, \Sigma)$ we will use. Since $\calP(\Omega, \Sigma)\subseteq ba(\Omega, \Sigma)$, according to Theorem~\ref{thm: dual}, we can equip $\calP(\Omega, \Sigma)$ with the weak* topology induced by $B(\Omega, \Sigma)$. Henceforth when talking about (semi-)continuity and compactness, they are all with respect to this weak* topology.

We can see that (a) is obviously true and (d) is true because by assumption $\calH$ is convex. It remains to check conditions (b) and (c).

\item\paragraph{Condition (b). }
According to the Banach-Alaoglu theorem, the set 
\begin{align*}
\bar{\calP}\bydef \left\{P: P\in ba(\Omega, \Sigma), P(\Omega)\leq 1\right\}
\end{align*}
is compact and $\calP(\Omega, \Sigma)\subseteq \bar{\calP}$. It suffices to show that $\calP(\Omega, \Sigma)$ is a closed subset of $\bar{\calP}$.
In fact, we have that
\begin{align*}
    \calP(\Omega, \Sigma) = \left(\bigcap_{U\in\Sigma} \left\{P\in\bar{\calP}: \bbE_{P'}[1_U]\geq 0\right\}\right) \cap \left\{P\in\bar{\calP}: \bbE_{P'}[1] = 1\right\}.
\end{align*}
Because both the function $1$ and the functions $1_U$ are in $B(\Omega, \Sigma)$, we can see that $\calP(\Omega, \Sigma)$ is the intersection of the preimages of closed sets under continuous functions, therefore is also closed. 
\item\paragraph{Condition (c). }
For any $h\in H$, the function $P' \mapsto  \bbE_{P'}[h]$ is a linear and continuous function; and the function $P' \mapsto \bar{D}_f(P'||Q)$ is convex and lower semi-continuous because according to its definition it is a supremum over a set of linear and continuous functions of $P'$. Therefore, $F$ is convex and lower semi-continuous on $P'$.
\end{proof}

\end{document}